\newtheorem{thm}{Theorem}
\newtheorem{lem}{Lemma}
\newtheorem{prop}{Proposition}
\newtheorem{cor}{Corollary}
\renewcommand{\Pr}[1]{\mathbb{P}\left[#1\right]}
\title{Unfolded Laplacian Spectral Embedding: A Theoretically Grounded Approach to Dynamic Network Representation}
\author[1]{Haruka Ezoe}
\author[1]{Hiroki Matsumoto}
\author[1,2,3]{Ryohei Hisano\thanks{\texttt{hisanor@g.ecc.u-tokyo.ac.jp}}}
\affil[1]{Department of Mathematical Informatics, Graduate School of Information Science and Technology, The University of Tokyo}
\affil[2]{Mathematics and Informatics Center, Graduate School of Information Science and Technology, The University of Tokyo}
\affil[3]{The Canon Institute for Global Studies}
\date{\today} 
\begin{document}

\maketitle
\begin{abstract}
Dynamic relational data arise in many machine learning applications, yet their evolving structure poses challenges for learning representations that remain consistent and interpretable over time. A common approach is to learn time varying node embeddings, whose usefulness depends on well defined stability properties across nodes and across time. We introduce Unfolded Laplacian Spectral Embedding (ULSE), a principled extension of unfolded adjacency spectral embedding to normalized Laplacian operators, a setting where stability guarantees have remained out of reach. We prove that ULSE satisfies both cross-sectional and longitudinal stability under a dynamic stochastic block model. Moreover, the Laplacian formulation yields a dynamic Cheeger-type inequality linking the spectrum of the unfolded normalized Laplacian to worst case conductance over time, providing structural insight into the embeddings. Empirical results on synthetic and real world dynamic networks validate the theory.
\end{abstract}

\section{Introduction}

Relational data arise in many machine learning applications, including recommendation systems~\cite{raza2026comprehensive}, social networks~\cite{sheshar2024social}, biological networks~\cite{kiani2021networks}, and knowledge graphs~\cite{galkin2024towards}. In many settings these relationships evolve over time, motivating methods that represent dynamic network structure for tasks such as clustering, prediction, and anomaly detection. A central approach is to learn \emph{dynamic node embeddings}: low-dimensional representations of nodes that vary over time and summarize evolving connectivity patterns.

For temporal comparisons to be meaningful, dynamic embeddings must satisfy basic stability requirements. \emph{Cross-sectional stability} requires that nodes with identical connectivity behavior at a fixed time be embedded identically, while \emph{longitudinal stability} requires that a node whose connectivity behavior does not change over time be embedded consistently across snapshots. Without these properties, temporal comparisons become unstable and difficult to interpret.

Although stability is often implicitly assumed, it has only recently been formalized. \citet{gallagher2021spectral} introduce precise definitions of cross-sectional and longitudinal stability and show that unfolded adjacency spectral embedding (UASE) satisfies both properties by jointly factorizing adjacency matrices across time. UASE remains one of the few dynamic embedding methods with rigorous stability guarantees.

At the same time, a fundamental gap remains. Many widely used spectral operators fall outside the scope of existing stability theory. In particular, normalized Laplacians are central in spectral graph theory due to their robustness to degree heterogeneity and their connection to clustering and conductance analysis. However, naive unfolding of normalized Laplacians can induce spurious temporal drift, since degree normalization varies across time even when a node’s underlying connectivity behavior remains unchanged. Existing approaches based on perturbation, dilation~\cite{zhao2021context}, or supra-Laplacian constructions~\cite{silva2018spectral,froyland2024spectral} do not resolve this issue and fail to enforce node level longitudinal stability.

In this paper, we address this gap by introducing \emph{Unfolded Laplacian Spectral Embedding} (ULSE), a dynamic embedding framework that extends the unfolding paradigm of UASE to normalized Laplacian based operators while preserving both cross-sectional and longitudinal stability under explicit and verifiable conditions. We study two variants, ULSE-n1 and ULSE-n2, corresponding to distinct normalization strategies across time. Under a dynamic stochastic block model~\cite{holland1983stochastic}, we prove that ULSE embeddings converge to well defined population limits and satisfy the desired stability properties.

Beyond stability guarantees, ULSE admits a structural interpretation grounded in classical spectral graph theory. For ULSE-n1, we derive a dynamic Cheeger-type inequality~\cite{chung1997spectral,nica2018brief} linking the spectrum of the unfolded normalized Laplacian to worst case conductance across time, extending Cheeger-style interpretations from static graphs to evolving networks. Experiments on synthetic and real world dynamic networks support the theory and demonstrate strong performance, particularly under degree heterogeneity.

The main contributions of this paper are:
\begin{itemize}
    \item We introduce Unfolded Laplacian Spectral Embedding (ULSE), extending unfolded adjacency spectral embedding to normalized Laplacian based operators in dynamic graphs.
    \item We establish convergence and prove both cross-sectional and longitudinal stability for ULSE under a dynamic stochastic block model.
    \item We derive a dynamic Cheeger-type inequality connecting the spectrum of the unfolded normalized Laplacian to conductance in evolving networks.
    \item We empirically validate the stability guarantees of ULSE and demonstrate strong clustering performance across synthetic and real world dynamic networks, particularly under degree heterogeneity.
\end{itemize}

\section{Problem Setup and Stability Requirements}
\label{sec:setup}

\subsection{Dynamic Network Embedding Setup}
\label{subsec:dyn-setup}

We observe a dynamic network as a sequence of $T$ graph snapshots $\mathcal{G} = \{G^{(1)},\ldots,G^{(T)}\}$, where each snapshot $G^{(t)}$ is defined on a common vertex set of size $n$. Let $\mathbf{A}^{(t)} \in \{0,1\}^{n\times n}$ denote the symmetric adjacency matrix at time $t$, where $\mathbf{A}^{(t)}_{ij}=1$ indicates the presence of an edge between nodes $i$ and $j$. Throughout, we consider undirected graphs and enforce symmetry by setting $\mathbf{A}^{(t)}_{ij}=\mathbf{A}^{(t)}_{ji}$.

Each snapshot is modeled as an inhomogeneous random graph~\cite{jones2020multilayer}. For $i \le j$, edges are generated independently according to
\[
\mathbf{A}^{(t)}_{ij} \sim \mathrm{Bernoulli}\!\left(\mathbf{P}^{(t)}_{ij}\right),
\]
where $\mathbf{P}^{(t)} \in [0,1]^{n\times n}$ is a symmetric matrix of edge probabilities at time $t$. Conditional on $\{\mathbf{P}^{(t)}\}_{t=1}^T$, edges are independent across node pairs and time steps.

Given $\mathcal{G}$, our objective is to learn two types of node representations: (i) a time invariant anchor embedding $\hat{\mathbf{X}} \in \mathbb{R}^{n\times d}$, and (ii) a sequence of time varying dynamic embeddings $\hat{\mathbf{Y}}^{(1)},\ldots,\hat{\mathbf{Y}}^{(T)} \in \mathbb{R}^{n\times d}$. The anchor embedding captures persistent node specific structure shared across time, while each dynamic embedding captures snapshot specific variation. We require that the dynamic embeddings are intrinsically aligned across time so that comparisons across snapshots are meaningful without post hoc Procrustes alignment. This alignment is achieved by constructing embeddings jointly across all snapshots.

For theoretical analysis, we also consider population (noise-free) quantities obtained by replacing $\mathbf{A}^{(t)}$ with $\mathbf{P}^{(t)}$ in the same construction. Population objects are denoted with a tilde, for example $\tilde{\mathbf{X}}$ and $\tilde{\mathbf{Y}}^{(t)}$, and similarly for degrees, Laplacian type operators, and associated spectral subspaces.


We use standard asymptotic notation $\mathcal{O}(\cdot)$, $\Omega(\cdot)$, $\Theta(\cdot)$, $o(\cdot)$, and $\omega(\cdot)$, with asymptotics taken as $n \to \infty$ unless otherwise specified, and write $\mathrm{poly}(k)$ for a polynomial in $k$. Unless otherwise stated, $\|\cdot\|_2$ denotes the spectral norm for matrices, $\|\cdot\|_F$ the Frobenius norm, and $\|\mathbf{x}\|$ the Euclidean norm of a vector $\mathbf{x}$. We repeatedly use standard norm inequalities, including $\|\mathbf X\|_{2} \le\|\mathbf X\|_F \le \sqrt{\text{rank}(\mathbf X)}\|\mathbf X\|_2$ for a matrix $\mathbf X$. Bounds are understood to hold almost surely. Table~\ref{tab:notation} in the appendix summarizes the notation used throughout the chapter.


\subsection{Stability Conditions}
\label{subsec:stability}

A central requirement for dynamic embeddings is that they support consistent comparisons both within a snapshot and across snapshots. Following the stability framework of \citet{gallagher2021spectral} (with a slightly relaxed formulation that does not require identical error distributions), we seek dynamic embeddings $\hat{\mathbf{Y}}^{(t)}$ that converge to population limits $\mathbf{Y}^{(t)}$ as $n \to \infty$, and whose population counterparts satisfy the following properties:

\begin{enumerate}
\item \textbf{Cross-sectional stability:} If $\mathbf{P}^{(t)}_{i:} = \mathbf{P}^{(t)}_{j:}$, then $\mathbf{Y}^{(t)}_{i:} = \mathbf{Y}^{(t)}_{j:}$.
\item \textbf{Longitudinal stability:} If $\mathbf{P}^{(t)}_{i:} = \mathbf{P}^{(s)}_{i:}$, then $\mathbf{Y}^{(t)}_{i:} = \mathbf{Y}^{(s)}_{i:}$.
\end{enumerate}

Here $\mathbf{P}^{(t)}_{i:}$ denotes the $i$-th row of $\mathbf{P}^{(t)}$. Cross-sectional stability ensures that nodes with identical connectivity behavior at a fixed time are embedded identically, while longitudinal stability ensures that a node whose connectivity behavior does not change over time receives a consistent embedding. These conditions are imposed at the population level and describe the target behavior that consistent estimators should recover asymptotically.

The stability conditions above are defined directly in terms of the underlying connectivity profiles $\mathbf{P}^{(t)}_{i:}$ and are therefore agnostic to the specific embedding construction. While unfolded adjacency based methods can be shown to satisfy both properties under mild conditions, enforcing the same guarantees for Laplacian based operators is substantially more delicate. Normalized Laplacians depend explicitly on degree matrices, so equality of probability rows does not generally imply equality of normalized rows unless the corresponding degrees are also matched. This observation highlights that longitudinal stability cannot be achieved without imposing explicit conditions on the degree structure across time. We formally establish and analyze the necessity of such conditions in later sections.

\subsection{Dynamic Stochastic Block Model (DSBM)}
\label{subsec:dsbm}

To obtain explicit stability guarantees, we specialize the inhomogeneous random graph model to a dynamic stochastic block model (DSBM)~\cite{holland1983stochastic}. At each time $t$, every node $i \in \{1,\ldots,n\}$ is assigned a (possibly time varying) community label $\mathbf{z}_i^{(t)} \in \{1,\ldots,K^{(t)}\}$, where $K^{(t)}$ denotes the number of communities present at time $t$, with marginal community proportions $\boldsymbol{\pi}^{(t)} = (\pi_1^{(t)},\ldots,\pi_{K^{(t)}}^{(t)})$ satisfying $\pi_k^{(t)}>0$ for all $k$. Let $\mathbf{n}_k^{(t)}$ denote the number of nodes assigned to community $k$ at time $t$. 
We assume that the collection of label trajectories \(\{\mathbf{z}_i^{(t)}\}_{t=1}^T\) admits only \(K\) distinct temporal patterns. These patterns correspond to $K$ latent trajectories, each describing how a node’s community membership evolves over time. Let \(\boldsymbol{\pi} = (\pi_1,\ldots,\pi_K)\) denote the distribution over these latent trajectories, where each node independently follows the $k$-th trajectory with probability \(\pi_k\). 
Accordingly, we associate each node $i \in \{1,\ldots,n\}$ with a community trajectory label $\mathbf{z}_i \in \{1,\ldots,K\}$, drawn independently according to
\(\boldsymbol{\pi}\).
To map latent trajectories to time-specific communities, for each time $t$ we introduce a mapping
\[
c^{(t)}: [K] \to [K^{(t)}],
\]
where $c^{(t)}(k)$ specifies the community index at time $t$ corresponding to the $k$-th latent trajectory.
Under this formulation, the community label of node $i$ at time $t$ is given by
\[
\mathbf{z}_i^{(t)} = c^{(t)}(\mathbf{z}_i), \quad t=1,\dots,T.
\]

Conditional on the community assignments at time $t$, edges are generated independently according to a sparsity parameter $\rho=\rho(n)\in(0,1]$ and a symmetric block probability matrix $\mathbf{B}^{(t)}\in[0,1]^{K^{(t)}\times K^{(t)}}$, so that
\[
\mathbf{P}^{(t)}_{ij}
=\rho\,\mathbf{B}^{(t)}_{\mathbf{z}_i^{(t)}\,\mathbf{z}_j^{(t)}}.
\]
We allow either the dense regime $\rho=1$ or a sparse regime in which $\rho\to0$ as $n\to\infty$, and assume throughout that $\rho=\omega(\log n / n)$. In addition, we assume a strictly positive minimum block probability at each time step,
\[
\mathbf{B}_{\min}^{(t)}:=\min_{k,\ell\in\{1,\ldots,K^{(t)}\}}\mathbf{B}^{(t)}_{k\ell}>0.
\]

Under the DSBM, equality of connectivity profiles $\mathbf{P}^{(t)}_{i:}$ corresponds to equality of block-level connectivity patterns at time $t$, independent of the specific labels assigned to nodes at other times. This observation allows the stability conditions of Section~\ref{subsec:stability} to be formulated and verified entirely in terms of $\mathbf{P}^{(t)}$ and $\mathbf{B}^{(t)}$, without requiring any persistence or temporal coupling of the community labels $\{\mathbf{z}_i^{(t)}\}$. Consequently, the population embeddings and finite-sample convergence guarantees for unfolded spectral methods derived in subsequent sections depend only on the sequence of edge probability matrices $\{\mathbf{P}^{(t)}\}_{t=1}^T$, and not on the temporal evolution of individual node memberships. For the theoretical analysis, equivalence classes are defined by the $K$ distinct label trajectories and have fixed sizes $\mathbf{n}$. Nodes assigned to different trajectories may coincide in the same community at a given time $t$, i.e., $\mathbf{z}_i^{(t)} = \mathbf{z}_j^{(t)}$, even though their trajectories differ over the full time horizon.

\subsection{Unfolded Adjacency Spectral Embedding (UASE)}
\label{subsec:uase}

We briefly review unfolded adjacency spectral embedding (UASE) \citep{gallagher2021spectral}, which provides a baseline unfolded spectral method with established stability guarantees. The unfolded adjacency matrix is formed by concatenating the $T$ adjacency matrices horizontally:
\[
\mathcal{A} = [\mathbf{A}^{(1)} \mid \cdots \mid \mathbf{A}^{(T)}] \in \mathbb{R}^{n\times nT}.
\]
Let $\mathcal{A}$ admit a rank-$d$ truncated singular value decomposition $\mathcal{A} = \mathbf{U}\boldsymbol{\Sigma}\mathbf{V}^\top + \mathbf{U}_\perp \boldsymbol{\Sigma}_\perp \mathbf{V}_\perp^\top$, where $\mathbf{U} \in \mathbb{O}\left(n\times d\right)$, $\mathbf{V} \in \mathbb{O}\left(nT\times d\right)$, and $\boldsymbol{\Sigma} \in \mathbb{R}^{d\times d}$ contains the $d$ largest singular values. Here, $\mathbb{O}\left({n \times d}\right)$ denotes the set of all real $n \times d$ matrices with orthonormal columns, that is, $\mathbb{O}\left({n \times d}\right)=\{\mathbf{U}\in\mathbb{R}^{n\times d}:\mathbf{U}^\top\mathbf{U}=\mathbf{I}_d\}$. Partition $\mathbf{V}$ into $T$ consecutive blocks $\mathbf{V}^{(t)} \in \mathbb{R}^{n\times d}$. UASE defines the anchor and dynamic embeddings as
\[
\hat{\mathbf{X}} = \mathbf{U}\boldsymbol{\Sigma}^{1/2}, \qquad \hat{\mathbf{Y}}^{(t)} = \mathbf{V}^{(t)}\boldsymbol{\Sigma}^{1/2}, \quad t = 1,\ldots,T.
\]

UASE is one of the few dynamic embedding constructions for which both cross-sectional and longitudinal stability can be established under an explicit probabilistic model. By jointly factorizing all snapshots, the unfolding construction intrinsically aligns embeddings across time and avoids the need for post hoc alignment procedures. UASE therefore serves as the conceptual and technical foundation for our goal of extending the unfolding paradigm to Laplacian based operators while preserving the same stability requirements.

\section{Unfolded Laplacian Spectral Embedding}
\label{sec:method}

This section introduces \emph{Unfolded Laplacian Spectral Embedding} (ULSE), a dynamic network embedding framework that extends unfolded adjacency spectral embedding (UASE) to normalized Laplacian type operators. ULSE produces both a time invariant anchor embedding and a sequence of time dependent dynamic embeddings, while preserving the stability guarantees established theoretically in Section~\ref{sec:theory}.

\subsection{Degree Quantities and Notation}

For each snapshot $t \in \{1,\ldots,T\}$, define the degree vector and degree matrix
\[
\mathbf{d}^{(t)} = \mathbf{A}^{(t)}\mathbf{1},
\qquad
\mathbf{D}^{(t)} = \mathrm{diag}\!\left(\mathbf{d}^{(t)}\right),
\]
where $\mathbf{1} \in \mathbb{R}^n$ denotes the all-ones vector and $\mathbf{d}^{(t)}_i$ is the degree of node $i$ at time $t$. We also define the aggregated degree quantities across time,
\[
\mathbf{d}^{(1:T)} = \sum_{t=1}^T \mathbf{d}^{(t)},
\qquad
\mathbf{D}^{(1:T)} = \mathrm{diag}\!\left(\mathbf{d}^{(1:T)}\right).
\]
Throughout, we assume that the degree matrices used for normalization are invertible. Under the DSBM and the assumed sparsity regime this holds with high probability, and in practice diagonal regularization may be applied when necessary.

\subsection{Unfolding Normalized Laplacian Operators}

ULSE replaces the adjacency matrices used in UASE with normalized Laplacian type operators and constructs a joint embedding across all time steps by horizontally unfolding these operators. We consider two normalization strategies, leading to two variants: \emph{ULSE-n1} and \emph{ULSE-n2}. The two variants differ only in how degree normalization is handled across time and are designed to address distinct sources of instability. For either variant, we form an unfolded matrix of the form
\[
\mathcal{L} =
\big[
\mathbf{L}^{(1)} \mid \cdots \mid \mathbf{L}^{(T)}
\big]
\in \mathbb{R}^{n \times nT},
\]
and compute a rank-$d$ truncated singular value decomposition, yielding an anchor embedding shared across time and a sequence of snapshot specific dynamic embeddings that are intrinsically aligned.

\subsection{ULSE-n1: Per-Snapshot Normalization}

ULSE-n1 applies normalization independently at each time step. For each snapshot $t$, define the normalized Laplacian
\[
\mathbf{L}_{\mathrm{n1}}^{(t)} =
\mathbf{I}
-
\mathbf{D}^{(t)-1/2}\mathbf{A}^{(t)}\mathbf{D}^{(t)-1/2}.
\]
The unfolded normalized Laplacian is then
\[
\mathcal{L}_{\mathrm{n1}} =
\big[
\mathbf{L}_{\mathrm{n1}}^{(1)} \mid \cdots \mid \mathbf{L}_{\mathrm{n1}}^{(T)}
\big].
\]
We compute a singular value decomposition
\[
\mathcal{L}_{\mathrm{n1}} =
\mathbf{U}\boldsymbol{\Sigma}\mathbf{V}^\top
+
\mathbf{U}_\perp \boldsymbol{\Sigma}_\perp \mathbf{V}_\perp^\top,
\]
where $\mathbf{U} \in \mathbb{O}^{n\times d}$, $\mathbf{V} \in \mathbb{O}^{nT\times d}$, and let $\boldsymbol{\Sigma} \in \mathbb{R}^{d\times d}$ be the diagonal matrix containing the singular values $\sigma_2,\dots,\sigma_{d+1}$ of $\mathcal{L}_{\mathrm{n1}}$, ordered increasingly. Although these singular values may in principle be zero, we assume they are positive. Partition $\mathbf{V}$ into $T$ consecutive blocks $\mathbf{V}^{(t)} \in \mathbb{R}^{n\times d}$. The anchor embedding and dynamic embeddings are defined as

\[
\begin{aligned}
&\hat{\mathbf{X}} = \mathbf{U}\boldsymbol{\Sigma}^{1/2}, \\
&\hat{\mathbf{Y}}^{(t)} =
\mathbf{V}^{(t)}\boldsymbol{\Sigma}^{1/2}
-
\mathbf{U}\boldsymbol{\Sigma}^{-1/2},
\quad t = 1,\ldots,T.
\end{aligned}
\]

The subtraction of $\mathbf{U}\boldsymbol{\Sigma}^{-1/2}$ is essential for achieving cross-sectional stability under normalized Laplacian embeddings and admits a direct operator level interpretation at the population level. Since each block satisfies $\mathbf{L}_{\mathrm{n1}}^{(t)} = \mathbf{I} - \mathbf{D}^{(t)-1/2}\mathbf{A}^{(t)}\mathbf{D}^{(t)-1/2}$, the reconstructed embedding $\hat{\mathbf{Y}}^{(t)}$ contains a node specific shift induced by aggregating information across all snapshots, arising from the identity component. ULSE-n1 explicitly removes this node wise shift, thereby restoring cross-sectional stability so that each snapshot embedding reflects only contemporaneous degree-normalized connectivity.

\subsection{ULSE-n2: Partially Aggregated Normalization}

ULSE-n2 incorporates time aggregated degree information into the normalization in order to enforce longitudinal stability without requiring an explicit correction term. For each snapshot $t$, we define the partially normalized Laplacian type operator
\[
\mathbf{L}_{\mathrm{n2}}^{(t)}
=
-
\mathbf{D}^{(1:T)-1/2}
\mathbf{A}^{(t)}
\mathbf{D}^{(t)-1/2},
\]
where $\mathbf{D}^{(1:T)}$ denotes the degree matrix aggregated across all time steps. Unlike the standard normalized Laplacian, $\mathbf{L}_{\mathrm{n2}}^{(t)}$ couples snapshot specific adjacency information with a global, time aggregated degree normalization on the left and a snapshot specific normalization on the right, thereby fixing node wise scaling across time.

The unfolded operator is constructed by horizontal concatenation,
\[
\mathcal{L}_{\mathrm{n2}}
=
\big[
\mathbf{L}_{\mathrm{n2}}^{(1)} \mid \cdots \mid \mathbf{L}_{\mathrm{n2}}^{(T)}
\big]
\in \mathbb{R}^{n\times nT}.
\]
We compute a truncated singular value decomposition
\[
\mathcal{L}_{\mathrm{n2}}
=
\mathbf{U}\boldsymbol{\Sigma}\mathbf{V}^\top
+
\mathbf{U}_\perp \boldsymbol{\Sigma}_\perp \mathbf{V}_\perp^\top,
\]
where $\boldsymbol{\Sigma}$ contains the $d$ \emph{largest} singular values of $\mathcal{L}_{\mathrm{n2}}$. Partitioning $\mathbf{V}$ into $T$ consecutive blocks $\mathbf{V}^{(t)} \in \mathbb{R}^{n\times d}$, the anchor embedding and dynamic embeddings are defined as
\[
\hat{\mathbf{X}} = \mathbf{U}\boldsymbol{\Sigma}^{1/2},
\qquad
\hat{\mathbf{Y}}^{(t)} = \mathbf{V}^{(t)}\boldsymbol{\Sigma}^{1/2},
\quad t = 1,\ldots,T.
\]

In contrast to ULSE-n1, ULSE-n2 does not require an explicit centering or correction term. The use of a common left normalization by $\mathbf{D}^{(1:T)}$ ensures that the scale of each node’s embedding is fixed across time, while the snapshot specific right normalization preserves sensitivity to temporal variation. As a result, nodes whose connectivity profiles remain unchanged across snapshots are embedded consistently, guaranteeing longitudinal stability under the dynamic stochastic block model without requiring equality of snapshot specific degrees.

\section{Theoretical Guarantees}
\label{sec:theory}

This section presents the main theoretical guarantees for Unfolded Laplacian Spectral Embedding (ULSE). Under the dynamic stochastic block model introduced in Section~\ref{subsec:dsbm}, we show that ULSE produces dynamic embeddings that (i) converge to well defined population (noise free) limits and (ii) satisfy both cross-sectional and longitudinal stability as defined in Section~\ref{subsec:stability}. Results are established separately for ULSE-n1 and ULSE-n2, followed by a structural interpretation based on a dynamic Cheeger-type inequality.

\subsection{Stability Guarantees for ULSE-n1}
\label{subsec:theory-n1}

We work under the dynamic stochastic block model introduced in Section~\ref{subsec:dsbm} and the notation established in Sections~\ref{sec:setup} and~\ref{sec:method}. Let $\mathbf{P}^{(t)}$ denote the edge probability matrix at time $t$, and let $\tilde{\mathbf{L}}_{\mathrm{n1}}^{(t)}$ be the corresponding population normalized Laplacian defined using $\mathbf{P}^{(t)}$ and the population degree matrix $\tilde{\mathbf{D}}^{(t)}$. Define the unfolded population operator
\[
\tilde{\mathcal{L}}_{\mathrm{n1}}
=
\bigl[\tilde{\mathbf{L}}_{\mathrm{n1}}^{(1)} \mid \cdots \mid \tilde{\mathbf{L}}_{\mathrm{n1}}^{(T)}\bigr].
\]

To characterize the informative spectral structure of $\tilde{\mathcal{L}}_{\mathrm{n1}}$, we also consider the associated community level operators.
Let \(\tilde{\mathbf{Q}}^{(t)} \in [0,1]^{K \times K}\) and \(\bar{\mathbf{Q}}^{(t)} \in [0,1]^{K \times K}\) be the community level edge probability matrices at time \(t\), defined as
\[
\tilde{\mathbf{Q}}_{kl}^{(t)} = \frac{n_k}{n} \frac{n_l}{n} \mathbf B_{c^{(t)}(k)c^{(t)}(l)}^{(t)}, \quad 
\bar{\mathbf{Q}}_{kl}^{(t)} = \pi_k \pi_l \mathbf B_{c^{(t)}(k)c^{(t)}(l)}^{(t)}.
\]
Let \(\tilde{\mathbf M}_{\mathrm{n1}}^{(t)}\) and \(\bar{\mathbf M}_{\mathrm{n1}}^{(t)}\) be the normalized Laplacians constructed from \(\tilde{\mathbf{Q}}^{(t)}\) and \(\bar{\mathbf{Q}}^{(t)}\), respectively.
We further define \(\tilde{\mathcal M}_{\mathrm{n1}}\) and \(\bar{\mathcal M}_{\mathrm{n1}}\) as the unfolded normalized Laplacians, and denote their singular values as follows
\[
\tilde{\lambda}_1 \le \tilde{\lambda}_2 \le \cdots \le \tilde{\lambda}_K, \quad \bar{\lambda}_1 \le \bar{\lambda}_2 \le \cdots \le \bar{\lambda}_K.
\]

The results below show that ULSE-n1 recovers stable dynamic embeddings provided that these informative singular values are well separated from the high multiplicity singular value $\sqrt{T}$ induced by the identity components of the normalized Laplacian. This separation condition ensures that the informative bottom singular subspace of $\tilde{\mathcal{L}}_{\mathrm{n1}}$ is identifiable and robust to sampling noise.
\begin{thm}[Stability of ULSE-n1]
\label{thm:stability-n1-main}
Suppose that the community level singular values satisfy
\[
0 \le \bar{\lambda}_1 < \bar{\lambda}_2 \le \cdots \le \bar{\lambda}_K < \sqrt{T}.
\]
Set the embedding dimension $d = K-1$. Then there exist matrices $\mathbf{Y}^{(t)} \in \mathbb{R}^{n\times d}$, $t=1,\ldots,T$, such that
\[
\max_{t\in\{1,\ldots,T\}}
\bigl\|\hat{\mathbf{Y}}^{(t)}-\mathbf{Y}^{(t)}\bigr\|_2
=
\mathcal{O}\!\left((\rho n)^{-1/2}\right)
\quad\text{a.s.},
\]
and the following stability properties hold:
\begin{itemize}
\item \textbf{Cross-sectional stability:} If $\mathbf{P}^{(t)}_{i:}=\mathbf{P}^{(t)}_{j:}$, then $\mathbf{Y}^{(t)}_{i:}=\mathbf{Y}^{(t)}_{j:}$.
\item \textbf{Longitudinal stability:} If $\mathbf{P}^{(t)}_{i:}=\mathbf{P}^{(s)}_{i:}$ and $\tilde{\mathbf{D}}^{(t)}=\tilde{\mathbf{D}}^{(s)}$, then $\mathbf{Y}^{(t)}_{i:}=\mathbf{Y}^{(s)}_{i:}$.
\end{itemize}
\end{thm}

Theorem~\ref{thm:stability-n1-main} establishes that ULSE-n1 produces stable dynamic embeddings. Cross-sectional stability follows from invariance of population connectivity profiles, while longitudinal stability is guaranteed when population degree matrices remain unchanged across time. This requirement is not an artifact of the analysis but a fundamental consequence of per-snapshot left degree normalization: no method based on independently left normalized Laplacians can achieve exact longitudinal invariance when degrees vary over time. ULSE-n1 therefore characterizes the strongest form of stability achievable under per-snapshot left side normalization and serves as a theoretically transparent baseline.

The proof follows a two step argument. We first establish uniform convergence of the empirical ULSE-n1 embeddings to their population counterparts at rate $(\rho n)^{-1/2}$. We then show that the population embeddings satisfy the desired stability properties exactly. Combining these two results yields the stated stability guarantees.

\begin{thm}[Convergence of ULSE-n1]
\label{thm:convergence-n1-main}
Assume $d=K-1$.
There exists an orthogonal matrix $\mathbf{W}\in\mathbb{O}(d\times d)$ such that, for each $t\in\{1,\ldots,T\}$,
\[
\|\hat{\mathbf{Y}}^{(t)}-\tilde{\mathbf{Y}}^{(t)}\mathbf{W}\|_{2}
=
\mathcal{O}\!\left((\rho n)^{-1/2}\right)
\quad\text{a.s.}
\]
\end{thm}

The bound in Theorem~\ref{thm:convergence-n1-main} implies consistency of the empirical embeddings. When combined with the population level stability properties of $\tilde{\mathbf{Y}}^{(t)}$ established below, this yields the stability guarantee stated in Theorem~\ref{thm:noise-free-n1-main}.

\begin{thm}[Stability of noise-free ULSE-n1]
\label{thm:noise-free-n1-main}
The population embeddings $\tilde{\mathbf{Y}}^{(t)}$ satisfy:
\begin{itemize}
\item If $\mathbf{P}^{(t)}_{i:}=\mathbf{P}^{(t)}_{j:}$, then $\tilde{\mathbf{Y}}^{(t)}_{i:}=\tilde{\mathbf{Y}}^{(t)}_{j:}$.
\item If $\mathbf{P}^{(t)}_{i:}=\mathbf{P}^{(s)}_{i:}$ and $\tilde{\mathbf{D}}^{(t)}=\tilde{\mathbf{D}}^{(s)}$, then $\tilde{\mathbf{Y}}^{(t)}_{i:}=\tilde{\mathbf{Y}}^{(s)}_{i:}$.
\end{itemize}
\end{thm}

\noindent Setting $\mathbf{Y}^{(t)}=\tilde{\mathbf{Y}}^{(t)}\mathbf{W}$ and applying a union bound over $t\in\{1,\ldots,T\}$ completes the proof of Theorem~\ref{thm:stability-n1-main}. To establish Theorem~\ref{thm:convergence-n1-main}, we analyze the unfolded normalized Laplacian operator using a standard spectral perturbation strategy. The proof proceeds in three steps: we first bound the deviation between the empirical unfolded operator $\mathcal{L}_{\mathrm{n1}}$ and its population counterpart $\tilde{\mathcal{L}}_{\mathrm{n1}}$ (Lemma~\ref{lem:l-dif-main}); we then characterize the singular value structure of $\tilde{\mathcal{L}}_{\mathrm{n1}}$ and identify the informative subspace (Lemma~\ref{lem:svd-main} and Corollary~\ref{cor:st-main}); finally, we apply a Davis--Kahan perturbation argument combined with row-wise norm control to bound the deviation of the empirical singular subspaces (Lemma~\ref{lem:uut-dif-main}).

\begin{lem}[Deviation bound]
\label{lem:l-dif-main}
\[
\|\mathcal{L}_{\mathrm{n1}}-\tilde{\mathcal{L}}_{\mathrm{n1}}\|_2
=
\mathcal{O}\!\left((\rho n)^{-1/2}\right)
\quad\text{a.s.}
\]
\end{lem}

This bound shows that unfolding per-snapshot normalized Laplacians does not amplify sampling noise: the empirical unfolded operator remains a small perturbation of its noise-free counterpart, even after concatenation across time.

Let $\mathcal{Z}\in\mathbb{R}^{n\times K}$ denote the indicator matrix of equivalence classes of nodes with identical population connectivity profiles across all time steps, and let $\mathcal{S}=\mathrm{span}(\mathcal{Z})$.

\begin{lem}[Spectral structure of the population operator]
\label{lem:svd-main}
Assume the community size matrix $\mathbf{N}=\mathrm{diag}(\mathbf{n})$ is invertible. Then $\tilde{\mathcal{L}}_{\mathrm{n1}}$ has (i) a singular value $\sqrt{T}$ with multiplicity $n-K$, whose left singular vectors span $\mathcal{S}^{\perp}$, and (ii) $K$ remaining singular values $\tilde{\lambda}_1,\ldots,\tilde{\lambda}_K$, whose left singular vectors lie in $\mathcal{S}$.
\end{lem}

Lemma~\ref{lem:svd-main} implies that all informative population level structure is confined to a $K$-dimensional subspace, while the orthogonal complement corresponds to a high multiplicity, non-informative singular value induced by the identity components of the normalized Laplacian.

\begin{cor}
\label{cor:st-main}
Under the separation condition on $\bar{\lambda}_1,\ldots,\bar{\lambda}_K$, the singular values of $\mathcal{L}_{\mathrm{n1}}$ satisfy
\[
\begin{aligned}
&\sigma_n=\mathcal{O}(1),\qquad
\sigma_2-\sigma_1=\Omega(1), \\
&\sigma_{K+1}-\sigma_K=\Omega(1)
\quad\text{a.s.}
\end{aligned}
\]
\end{cor}

The constant spectral gaps in this corollary ensure that the informative singular subspace is well separated from the remainder of the spectrum, which is essential for the stability of the subsequent subspace perturbation analysis.

\begin{lem}[Projection deviation bound]
\label{lem:uut-dif-main}
\[
\begin{aligned}
&\|\mathbf{U}\mathbf{U}^{\top}-\tilde{\mathbf{U}}\tilde{\mathbf{U}}^{\top}\|_F
=
\mathcal{O}\!\left((\rho n)^{-1/2}\right)\quad\text{a.s.}, \\
&\|\mathbf{V}\mathbf{V}^{\top}-\tilde{\mathbf{V}}\tilde{\mathbf{V}}^{\top}\|_F
=
\mathcal{O}\!\left((\rho n)^{-1/2}\right)
\quad\text{a.s.}
\end{aligned}
\]
\end{lem}

Lemma~\ref{lem:uut-dif-main} converts the operator level deviation and spectral separation into quantitative control of the empirical singular subspaces, enabling uniform convergence of the resulting node embeddings.

\subsection{Stability Guarantees for ULSE-n2}
\label{subsec:theory-n2}

ULSE-n2 employs partially time aggregated degree normalization, which removes the need for per-snapshot degree matching in order to achieve longitudinal stability. Let $\mathcal{L}_{\mathrm{n2}}$ denote the unfolded operator constructed using the ULSE-n2 normalization, and let $\bar{\mathcal{L}}_{\mathrm{n2}}$ denote its community level population counterpart under the dynamic stochastic block model.

\begin{thm}[Stability of ULSE-n2]
\label{thm:stability-n2-main}
Suppose that the community level operator $\bar{\mathcal{L}}_{\mathrm{n2}}$ has rank $K$, and set the embedding dimension $d = K$. Then there exist matrices $\mathbf{Y}^{(t)} \in \mathbb{R}^{n\times d}$, $t=1,\ldots,T$, such that $\max_{t \in \{1,\ldots,T\}} \|\hat{\mathbf{Y}}^{(t)} - \mathbf{Y}^{(t)}\|_2 = \mathcal{O}((\rho n)^{-1/2})$ almost surely, and the following stability properties hold: \begin{itemize} \item \textbf{Cross-sectional stability:} If $\mathbf{P}^{(t)}_{i:} = \mathbf{P}^{(t)}_{j:}$, then $\mathbf{Y}^{(t)}_{i:} = \mathbf{Y}^{(t)}_{j:}$. \item \textbf{Longitudinal stability:} If $\mathbf{P}^{(t)}_{i:} = \mathbf{P}^{(s)}_{i:}$, then $\mathbf{Y}^{(t)}_{i:} = \mathbf{Y}^{(s)}_{i:}$. \end{itemize}
\end{thm}

In contrast to ULSE-n1, ULSE-n2 retains all $K$ informative dimensions and does not require equality of snapshot specific degree matrices to ensure longitudinal stability. By using a common, time aggregated degree normalization, ULSE-n2 fixes node wise scaling across snapshots at the population level, so that nodes with unchanged connectivity profiles are embedded consistently over time. Since the proof is analogous to that of ULSE-n1, we omit the details.


\subsection{Structural Interpretation via a Dynamic Cheeger Inequality}
\label{subsec:theory-cheeger}

Beyond stability guarantees, ULSE admits a structural interpretation that links its spectrum to graph partition quality over time. Let $\mathcal{G} = \{G^{(1)},\ldots,G^{(T)}\}$ denote the dynamic graph, and define its dynamic $k$-way conductance by
\[
\phi_k(\mathcal{G})
=
\max_{t \in \{1,\ldots,T\}} \phi_k(G^{(t)}),
\]
where $\phi_k(G^{(t)})$ is the standard $k$-way conductance of snapshot $G^{(t)}$.

Let $\sigma_k$ denote the $k$-th smallest nontrivial singular value of the unfolded normalized Laplacian $\mathcal{L}_{\mathrm{n1}}$. The following result relates the spectrum of the unfolded operator to the worst case partition quality across time.

\begin{prop}[Dynamic Cheeger bound]
\label{prop:dynamic-cheeger-main}
\[
\begin{aligned}
\frac{
\sqrt{
\max\!\left\{
\sigma_k^2 -
\min_{t \in \{1,\ldots,T\}}
\left\|
\mathcal{L}_{\mathrm{n1}}^{-t}
\right\|_2^2,
\,0
\right\}
}
}{2} \\
\le
\phi_k(\mathcal{G})
\le
\mathrm{poly}(k)\,\sqrt{\sigma_k},
\end{aligned}
\]

\noindent where $\mathcal{L}_{\mathrm{n1}}^{-t}$ denotes the unfolded normalized Laplacian with the block corresponding to snapshot $t$ removed.
\end{prop}

Proposition~\ref{prop:dynamic-cheeger-main} provides a dynamic analogue of the higher order Cheeger inequality. The lower bound quantifies the extent to which the $k$-th singular structure of the unfolded operator is supported uniformly across time. If for all snapshots $t$,
\[
\sigma_k^2 \le \left\| \mathcal{L}_{\mathrm{n1}}^{-t} \right\|_2^2,
\]
then the the lower bound becomes vacuous. In this case, the unfolded spectrum cannot rule out the possibility of a low conductance $k$-way partition shared across time.

The upper bound follows by applying the higher order Cheeger inequality to each snapshot individually and taking a worst case maximum over time. Together, the bounds show that small singular values of the unfolded normalized Laplacian correspond to the existence of partitions that achieve low conductance simultaneously across snapshots, while large singular values indicate temporal inconsistency in partition structure. The proof combines Weyl’s inequality with the higher order Cheeger inequality and is provided in the appendix.

\section{Related Work}

Dynamic network embedding methods aim to learn low-dimensional representations of evolving graphs that are comparable across time. Early approaches embed snapshots independently and rely on post hoc alignment, which does not guarantee intrinsic temporal consistency.

Joint spectral constructions provide a more principled alternative. Unfolded adjacency spectral embedding (UASE) \citep{gallagher2021spectral} is one of the few methods with formal cross-sectional and longitudinal stability guarantees. However, existing theory does not extend to normalized Laplacian operators or account for the role of degree normalization in dynamic settings.

Deep learning approaches for temporal graphs, including JODIE \citep{kumar2019predicting}, DyRep \citep{trivedi2019dyrep}, TGN \citep{rossi2020temporal}, and DyGFormer \citep{yu2023better}, achieve strong predictive performance but lack stability guarantees for time varying embeddings.

Normalized Laplacians are central in static spectral graph theory \citep{chung1997spectral} and have been extended to temporal graphs via supra-Laplacian constructions \citep{silva2018spectral,froyland2024spectral}, but these methods do not enforce node level temporal stability. ULSE fills this gap by providing a Laplacian based unfolded embedding with explicit stability guarantees.

\section{Experiments}\label{sec:experiments}

We evaluate ULSE with two goals: (i) validating the cross-sectional and longitudinal stability properties predicted by our theory, and (ii) assessing downstream utility via node clustering, with particular emphasis on robustness under degree heterogeneity.

\begin{figure*}[htbp]
  \centering
  \begin{subfigure}[t]{0.32\textwidth}
    \centering
    \includegraphics[width=\textwidth]{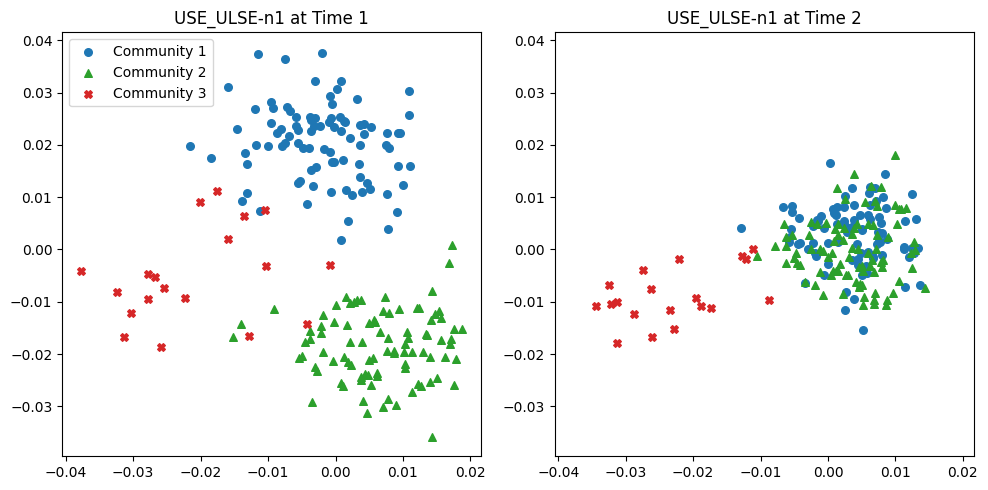}
    \caption{ULSE-n1}
    \label{fig:ulse_n1}
  \end{subfigure}
  \hfill
  \begin{subfigure}[t]{0.32\textwidth}
    \centering
    \includegraphics[width=\textwidth]{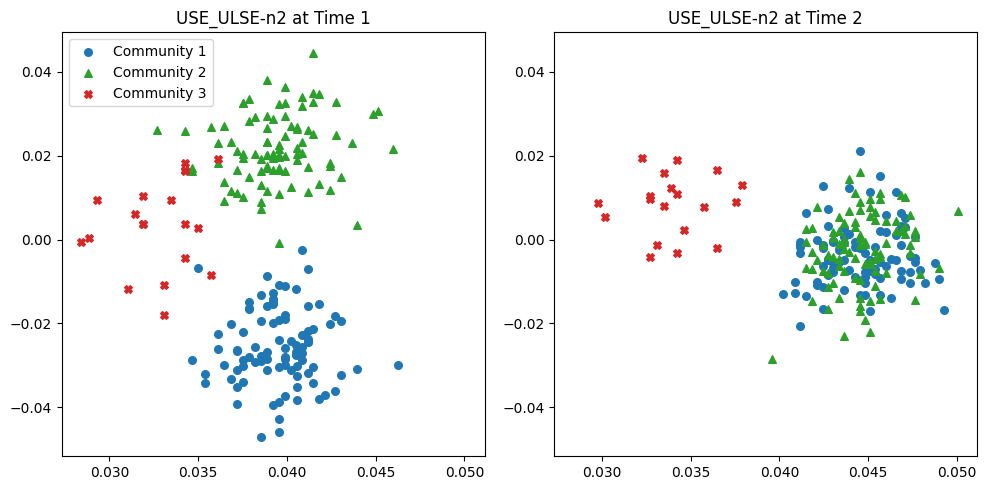}
    \caption{ULSE-n2}
    \label{fig:ulse_n2}
  \end{subfigure}
  \hfill
  \begin{subfigure}[t]{0.32\textwidth}
    \centering
    \includegraphics[width=\textwidth]{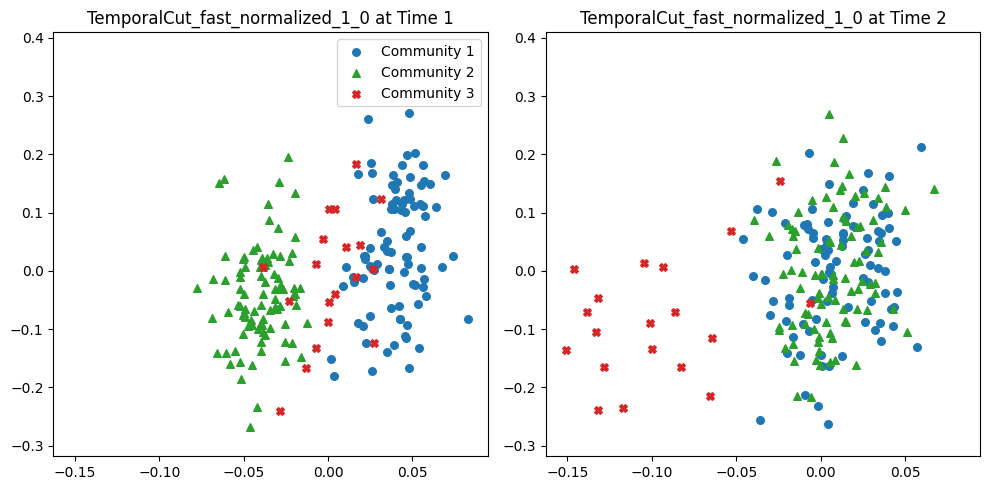}
    \caption{TempCut-N}
    \label{fig:temporalcut}
  \end{subfigure}
  \caption{
  Comparison of ULSE-n1, ULSE-n2, and TempCut-N on synthetic data. Colors labeled as Community 1–3 correspond to the three latent community trajectories.
  }
  \label{fig:comparison_synthetic}
\end{figure*}

\subsection{Datasets}

We construct two synthetic dynamic networks from a dynamic stochastic block model observed over \(T=3\) time steps. Node memberships are governed by \(K=3\) latent community trajectories that map deterministically to observable communities, with the number of communities varying over time \((K^{(1)}=3, K^{(2)}=2, K^{(3)}=2)\). The two synthetic settings differ only in community size distributions, inducing substantially different levels of degree heterogeneity, with the second setting exhibiting a pronounced small community. We fix the intra- and inter-community probabilities at \(p=0.4\) and \(q=0.2\), respectively. Ground truth community labels are available at each time step. Additional details and extended results are provided in the appendix.

We evaluate ULSE on three real world dynamic graph datasets: \textbf{Brain} (dynamic brain connectivity), \textbf{School} (student interaction), and \textbf{Stock} (financial dependency network). We refer to \citet{liu2024deep} for detailed descriptions of the Brain and School datasets, and discretize the School data into five snapshots using equal length time bins. For the Stock dataset \citep{silva2018spectral}, we construct one graph per quarter from S\&P~500 log returns using a nonparanormal transformation \citep{liu2009nonparanormal} and graphical lasso with a fixed regularization parameter \(\lambda=10^{-5}\)~\citep{friedman2008sparse}.

\begin{table*}[ht!]
\centering
\resizebox{1.0\textwidth}{!}{%
\begin{tabular}{l|l|ccccccccc|cc}
\hline
Dataset & Metric & OMNI & UASE & TempCut-N & TempCut-S & Node2Vec & JODIE & DyRep & TGN & DyGFormer & ULSE-n1 & ULSE-n2 \\
\hline
Brain & ACC & 0.246 & 0.324 & 0.129 & 0.131 & 0.303 & 0.205 & 0.179 & 0.228 & 0.157 & \textbf{0.376} & \underline{0.358} \\
Brain & NMI & 0.196 & \underline{0.291} & 0.003 & 0.005 & 0.220 & 0.110 & 0.065 & 0.152 & 0.028 & \textbf{0.292} & 0.283 \\
Brain & ARI & 0.087 & \underline{0.161} & -0.000 & 0.001 & 0.103 & 0.045 & 0.031 & 0.065 & 0.012 & \textbf{0.164} & 0.153 \\
Brain & F1 & 0.201 & 0.309 & 0.111 & 0.111 & 0.334 & 0.200 & 0.159 & 0.214 & 0.140 & \textbf{0.387} & \underline{0.362} \\
\hline
School & ACC & 0.411 & 0.853 & 0.201 & 0.204 & 0.978 & 0.194 & 0.194 & 0.216 & 0.176 & \underline{0.994} & \textbf{0.997} \\
School & NMI & 0.472 & 0.813 & 0.066 & 0.066 & 0.965 & 0.089 & 0.085 & 0.108 & 0.056 & \underline{0.987} & \textbf{0.994} \\
School & ARI & 0.250 & 0.655 & 0.011 & 0.010 & 0.954 & 0.028 & 0.020 & 0.024 & 0.006 & \underline{0.985} & \textbf{0.993} \\
School & F1 & 0.360 & 0.867 & 0.188 & 0.194 & 0.977 & 0.148 & 0.154 & 0.202 & 0.109 & \underline{0.994} & \textbf{0.997} \\
\hline
Stock & ACC & 0.225 & 0.389 & 0.170 & 0.168 & 0.247 & 0.173 & 0.213 & 0.192 & 0.156 & \textbf{0.453} & \underline{0.410} \\
Stock & NMI & 0.125 & 0.300 & 0.069 & 0.059 & 0.185 & 0.060 & 0.109 & 0.106 & 0.000 & \textbf{0.385} & \underline{0.337} \\
Stock & ARI & 0.038 & \underline{0.155} & 0.006 & 0.002 & 0.065 & 0.002 & 0.028 & 0.022 & 0.000 & \textbf{0.185} & 0.154 \\
Stock & F1 & 0.177 & 0.412 & 0.156 & 0.149 & 0.281 & 0.159 & 0.180 & 0.175 & 0.036 & \textbf{0.477} & \underline{0.430} \\
\hline
Synthetic 1 & ACC & 0.780 & \underline{0.997} & 0.587 & 0.557 & 0.695 & 0.512 & 0.497 & 0.510 & 0.500 & \textbf{1.000} & 0.997 \\
Synthetic 1 & NMI & 0.442 & \underline{0.984} & 0.078 & 0.051 & 0.269 & 0.007 & 0.013 & 0.023 & 0.011 & \textbf{1.000} & 0.977 \\
Synthetic 1 & ARI & 0.489 & \underline{0.990} & 0.074 & 0.039 & 0.313 & 0.008 & 0.008 & 0.019 & 0.008 & \textbf{1.000} & 0.988 \\
Synthetic 1 & F1 & 0.774 & \underline{0.997} & 0.578 & 0.543 & 0.688 & 0.475 & 0.460 & 0.496 & 0.480 & \textbf{1.000} & 0.997 \\
\hline
Synthetic 2 & ACC & 0.765 & 0.772 & 0.563 & 0.508 & 0.630 & 0.532 & 0.522 & 0.575 & 0.478 & \underline{0.980} & \textbf{0.983} \\
Synthetic 2 & NMI & 0.482 & 0.551 & 0.076 & 0.008 & 0.169 & 0.003 & 0.057 & 0.104 & 0.004 & \underline{0.865} & \textbf{0.936} \\
Synthetic 2 & ARI & 0.463 & 0.542 & 0.066 & -0.000 & 0.198 & 0.001 & 0.042 & 0.086 & -0.005 & \underline{0.922} & \textbf{0.964} \\
Synthetic 2 & F1 & 0.786 & 0.670 & 0.521 & 0.434 & 0.560 & 0.476 & 0.488 & 0.497 & 0.419 & \underline{0.949} & \textbf{0.966} \\
\hline
\end{tabular}}
\caption{Node clustering results across datasets. Best results are in bold, and second best are underlined.}
\label{tab:clustering_results}
\end{table*}

\subsection{Experimental Setup and Stability Diagnostics}

We apply $K$-means clustering to the node embeddings at each time step (following \citet{cheng2025community}) and evaluate performance using accuracy (ACC), normalized mutual information (NMI), adjusted Rand index (ARI), and macro-averaged F1 score (F1). The source code is provided as supplementary material.

We use the synthetic dataset to empirically verify that ULSE satisfies both cross-sectional and longitudinal stability, whereas supra-Laplacian embeddings may not \citep{silva2018spectral}. Figure~\ref{fig:comparison_synthetic} visualizes embeddings across two time steps for ULSE-n1, ULSE-n2, and TempCut-N. Both ULSE variants preserve consistent representations for nodes whose connectivity profiles remain unchanged across time, in accordance with the stability requirements. In contrast, TempCut-N fails to maintain alignment across snapshots.

\subsection{Node Clustering Results}

Table~\ref{tab:clustering_results} reports node clustering performance across all datasets. ULSE-n1 and ULSE-n2 are consistently among the top performing methods on both synthetic and real world graphs. In balanced settings, ULSE performs comparably to adjacency based spectral methods such as UASE, while in the presence of substantial degree heterogeneity ULSE significantly outperforms all baselines across evaluation metrics. On real world datasets, ULSE variants achieve strong and stable performance and consistently outperform supra-Laplacian methods, which perform poorly on stability-sensitive structure.

The performance gap observed on Synthetic~2 highlights a regime in which Laplacian normalization is particularly beneficial. The presence of a substantially smaller community induces systematic degree imbalance, causing adjacency based spectral methods to suppress information from low-degree nodes and distort community recovery \citep{krzakala2013spectral}. In contrast, ULSE explicitly normalizes by degree, yielding substantially improved robustness under degree heterogeneity. While deep temporal models such as JODIE, DyRep, TGN, and DyGFormer can be competitive on some datasets, they do not enforce stability by design. Additional qualitative analyses in the appendix show that these methods often fail to preserve consistent community structure across time, unlike ULSE-n1 and ULSE-n2.

\section{Conclusion}

We introduced Unfolded Laplacian Spectral Embedding (ULSE), a principled framework for dynamic network representation based on normalized Laplacians. ULSE satisfies both cross-sectional and longitudinal stability under a dynamic stochastic block model, extending prior guarantees for unfolded adjacency embeddings to a broader class of spectral operators. 

Our analysis yields a dynamic Cheeger-type inequality linking the spectrum of the unfolded normalized Laplacian to conductance over time, and experiments support the theory and demonstrate strong clustering performance, particularly under degree heterogeneity.




\section{Acknowledgements}
We thank Ryoma Kondo and Kohei Miyaguchi for helpful discussions. 
R.H. is supported by JST FOREST Program (JPMJFR216Q), JST PRESTO Program (JPMJPR2469), 
Grant-in-Aid for Scientific Research (KAKENHI, JP24K03043), 
and the UTEC-UTokyo FSI Research Grant Program.
\bibliographystyle{plainnat}
\bibliography{example_paper}


\clearpage
\appendix

\clearpage
\thispagestyle{empty} 

\begin{center}
  {\LARGE \bf Appendix} \\[2ex]

\end{center}

This appendix contains additional theoretical analysis, counterexamples, proofs, and experimental details that support the main chapter.

\section{Notation Summary}
\label{subsec:notation}

Table~\ref{tab:notation} summarizes the notation used throughout the chapter. Superscripts \((t)\) always index time and are written explicitly when a quantity varies across snapshots. In particular, node level community labels \(\mathbf{z}_i^{(t)}\) may vary with \(t\), while the matrix \(\mathcal{Z}\) denotes equivalence classes of nodes with identical population connectivity profiles and is \emph{not} time indexed. Tildes \(\tilde{\cdot}\) denote population (noise-free) quantities. All matrices are written in boldface for clarity.

\begin{table}[H]
\centering
\footnotesize
\setlength{\tabcolsep}{3pt}
\renewcommand{\arraystretch}{0.85}
\resizebox{\linewidth}{!}{%
\begin{tabular}{p{0.48\linewidth} >{\raggedright\arraybackslash}p{0.5\linewidth}}
\toprule
\textbf{Symbol} & \textbf{Meaning} \\
\midrule

\multicolumn{2}{l}{\textbf{Indices and dimensions}}\\
\(n, T\) & Number of nodes; number of time steps. \\
\(i,j \in \{1,\dots,n\}\) & Node indices. \\
\(s,t \in \{1,\dots,T\}\) & Time indices. \\
\(K\) & Number of latent communities / equivalence classes. \\
\(d\) & Embedding dimension. \\

\midrule
\multicolumn{2}{l}{\textbf{Dynamic graphs and observed quantities}}\\
\(\mathcal{G}=\{G^{(1)},\dots,G^{(T)}\}\) & Dynamic graph as a sequence of snapshots. \\
\(\mathbf{A}^{(t)}\in\{0,1\}^{n\times n}\) & Adjacency matrix at time \(t\). \\
\(\mathcal{A}=[\mathbf{A}^{(1)}|\cdots|\mathbf{A}^{(T)}]\) & Unfolded adjacency matrix. \\
\(\mathbf{d}^{(t)},\mathbf{d}^{(t)}_i\) & Degree vector at time \(t\); degree of node \(i\). \\
\(\mathbf{D}^{(t)}=\mathrm{diag}(\mathbf{d}^{(t)})\) & Degree matrix at time \(t\). \\
\(\mathbf{d}^{(1:T)},\mathbf{D}^{(1:T)}\) & Aggregated degrees across time. \\

\midrule
\multicolumn{2}{l}{\textbf{Population (noise-free) quantities}}\\
\(\mathbf{P}^{(t)}\in[0,1]^{n\times n}\) & Edge probability matrix at time \(t\). \\
\(\tilde{\mathbf{d}}^{(t)},\tilde{\mathbf{d}}^{(t)}_i\) & Population degree vector; population degree of node \(i\). \\
\(\tilde{\mathbf{D}}^{(t)}=\mathrm{diag}(\tilde{\mathbf{d}}^{(t)})\) & Population degree matrix at time \(t\). \\

\midrule
\multicolumn{2}{l}{\textbf{Dynamic stochastic block model (DSBM)}}\\
\(\mathbf{z}^{(t)}\in\{1,\dots,K\}^n\) & Community assignment vector at time \(t\). \\
\(\boldsymbol{\pi}\in\mathbb{R}^K\) & Marginal community proportion vector. \\
\(\mathbf{n}^{(t)}\in\mathbb{N}^K\) & Community size vector at time \(t\) (used only in the generative model). \\
\(\rho\) & Sparsity parameter. \\
\(\mathbf{B}^{(t)}\in[0,1]^{K\times K}\) & Block probability matrix at time \(t\). \\
\(\mathbf{P}^{(t)}_{ij}=\rho\,\mathbf{B}^{(t)}_{\mathbf{z}^{(t)}_i\mathbf{z}^{(t)}_j}\) & DSBM edge probability. \\

\midrule
\multicolumn{2}{l}{\textbf{Equivalence classes and block structure}}\\
\(\mathcal{Z}\in\{0,1\}^{n\times K}\) & Indicator matrix for equivalence classes of nodes with identical population connectivity profiles. \\
\(\mathcal{S}=\mathrm{span}(\mathcal{Z})\) & Subspace spanned by the columns of the equivalence-class indicator matrix \(\mathcal{Z}\). \\
\(\mathbf{n}, \mathbf{N}\) & Equivalence class size vector and diagonal matrix, with \(\mathbf{N}=\mathrm{diag}(\mathbf{n})\). \\

\midrule
\multicolumn{2}{l}{\textbf{ULSE-n1 operators}}\\
\(\mathbf{L}_{\mathrm{n1}}^{(t)}=\mathbf{I}-\mathbf{D}^{(t)-1/2}\mathbf{A}^{(t)}\mathbf{D}^{(t)-1/2}\) & Normalized Laplacian at time \(t\). \\
\(\mathcal{L}_{\mathrm{n1}},\mathcal{L}_{\mathrm{n1}}^{-t}\) & Unfolded normalized Laplacian; with block \(t\) removed. \\
\(\tilde{\mathbf{L}}_{\mathrm{n1}}^{(t)},\tilde{\mathcal{L}}_{\mathrm{n1}}\) & Population counterparts. \\
\(\bar{\mathcal{L}}_{\mathrm{n1}}\) & community level population ULSE-n1 operator. \\

\midrule
\multicolumn{2}{l}{\textbf{ULSE-n2 operators}}\\
\(\mathbf{L}_{\mathrm{n2}}^{(t)}=-\mathbf{D}^{(1:T)-1/2}\mathbf{A}^{(t)}\mathbf{D}^{(t)-1/2}\) & ULSE-n2 operator at time \(t\). \\
\(\mathcal{L}_{\mathrm{n2}}\) & Unfolded ULSE-n2 operator. \\
\(\tilde{\mathbf{L}}_{\mathrm{n2}}^{(t)},\tilde{\mathcal{L}}_{\mathrm{n2}}\) & Population counterparts. \\
\(\bar{\mathcal{L}}_{\mathrm{n2}}\) & community level population ULSE-n2 operator. \\

\midrule
\multicolumn{2}{l}{\textbf{Spectral quantities}}\\
\(\sigma_k,\tilde{\sigma}_k\) & \(k\)-th smallest nontrivial singular value of an unfolded Laplacian; empirical and population versions. \\
\(\bar{\lambda}_1\le\cdots\le\bar{\lambda}_K\) & Nontrivial singular values of a community level population operator. \\

\midrule
\multicolumn{2}{l}{\textbf{Conductance and Cheeger-type quantities}}\\
\(\phi_k(\mathcal{G}),\phi_k(G^{(t)})\) & \(k\)-way conductance of the dynamic graph; of snapshot \(t\). \\

\midrule
\multicolumn{2}{l}{\textbf{Embeddings and SVD conventions}}\\
\(\hat{\mathbf{X}},\hat{\mathbf{Y}}^{(t)}\) & Anchor embedding; dynamic embedding at time \(t\). \\
\(\tilde{\mathbf{X}},\tilde{\mathbf{Y}}^{(t)}\) & Population embeddings. \\
\(\mathbf{W}\in\mathbb{O}(d\times d),\ \mathbb{O}(n\times d)\) & Orthogonal matrices and Stiefel manifolds (orthonormal columns). \\

\midrule
\multicolumn{2}{l}{\textbf{Norms and asymptotics}}\\
\(\|\cdot\|_2,\|\cdot\|_F\) & Spectral and Frobenius norms. \\
\(\mathcal{O}(\cdot),\Omega(\cdot),\omega(\cdot)\) & Asymptotic notation (a.s.\ when stated). \\
\(\mathrm{poly}(k)\) & Polynomial in \(k\) with fixed degree. \\

\bottomrule
\end{tabular}}
\caption{Summary of notation.}
\label{tab:notation}
\end{table}

\section{Limitations of Context-Aware Perturbation Embedding}

\citet{zhao2021context} propose a context-aware spectral embedding framework for dynamic graphs, in which each snapshot \(G^{(t)}\) is modeled as a perturbation of a shared global context graph \(G^{\mathrm{ctx}}\). Let \(\mathbf{A}^{(t)}\) denote the adjacency matrix at time \(t\), and define the context adjacency matrix by entrywise averaging
\[
\mathbf{A}^{\mathrm{ctx}}_{ij}=\frac{1}{T}\sum_{t=1}^{T}\mathbf{A}^{(t)}_{ij}.
\]
Let \(\mathbf{D}^{\mathrm{ctx}}\) be the associated degree matrix, and define the normalized Laplacian of the context graph as
\[
\mathbf{L}^{\mathrm{ctx}}
=
\mathbf{I}
-
(\mathbf{D}^{\mathrm{ctx}})^{-1/2}
\mathbf{A}^{\mathrm{ctx}}
(\mathbf{D}^{\mathrm{ctx}})^{-1/2}.
\]

Each snapshot Laplacian is written as
\[
\mathbf{L}_{\mathrm{snap}}^{(t)}
=
\mathbf{L}^{\mathrm{ctx}}
+
\boldsymbol{\Delta}^{(t)},
\]
where \(\boldsymbol{\Delta}^{(t)}\) is assumed to be a small symmetric perturbation. Suppose that \(\mathbf{L}^{\mathrm{ctx}}\) has a nontrivial spectral gap \(\delta^{\mathrm{ctx}}>0\) and that the perturbations satisfy
\[
\varepsilon
:=
\max_{t\in\{1,\ldots,T\}}
\|\boldsymbol{\Delta}^{(t)}\|_2
<\infty.
\]

Standard matrix perturbation theory \citep{stewart1990matrix} yields first-order approximations for the eigenvalues and eigenvectors of \(\mathbf{L}_{\mathrm{snap}}^{(t)}\). Let \(\lambda_i^{\mathrm{ctx}}\) and \(\mathbf{u}_i^{\mathrm{ctx}}\) denote the eigenpairs of \(\mathbf{L}^{\mathrm{ctx}}\). For each \(i\),
\begin{align}
\lambda_i^{(t)}
&=
\lambda_i^{\mathrm{ctx}}
+
(\mathbf{u}_i^{\mathrm{ctx}})^{\top}
\boldsymbol{\Delta}^{(t)}
\mathbf{u}_i^{\mathrm{ctx}}
+
o(\varepsilon),
\label{eq:eigval-perturbation}
\\
\mathbf{u}_i^{(t)}
&=
\mathbf{u}_i^{\mathrm{ctx}}
+
\sum_{j\neq i}
\frac{
(\mathbf{u}_j^{\mathrm{ctx}})^{\top}
\boldsymbol{\Delta}^{(t)}
\mathbf{u}_i^{\mathrm{ctx}}
}{
\lambda_i^{\mathrm{ctx}}-\lambda_j^{\mathrm{ctx}}
}
\mathbf{u}_j^{\mathrm{ctx}}
+
o(\varepsilon).
\label{eq:eigvec-perturbation}
\end{align}

These expansions suggest constructing the embedding at time \(t\) using the perturbed eigenvectors \(\mathbf{u}_1^{(t)},\ldots,\mathbf{u}_k^{(t)}\). However, even when stability notions are relaxed to allow time varying communities or time varying connectivity profiles, this perturbative construction does not induce an injective map from graph snapshots to embeddings: structurally distinct snapshot Laplacians may yield identical collections of perturbed eigenvectors (up to the usual orthogonal indeterminacy). Consequently, the context-aware perturbation framework cannot, in general, guarantee cross-sectional stability or longitudinal stability.

This limitation appears explicitly in the proof of Theorem~4.1 of \citet{zhao2021context}, which claims that
\[
\mathbf{U}_{\mathrm{ctx}}^{(s)}=\mathbf{U}_{\mathrm{ctx}}^{(t)}
\quad\Rightarrow\quad
\mathbf{L}_{\mathrm{snap}}^{(s)}=\mathbf{L}_{\mathrm{snap}}^{(t)},
\]
where \(\mathbf{U}_{\mathrm{ctx}}^{(t)}\) denotes the matrix whose columns are the embedding eigenvectors at time \(t\). The argument relies on the condition
\[
(\mathbf{u}_i^{\mathrm{ctx}})^{\top}
(\mathbf{L}_{\mathrm{snap}}^{(t)}-\mathbf{L}_{\mathrm{snap}}^{(s)})
\mathbf{u}_i^{\mathrm{ctx}}
=
0
\quad\text{for all }i,
\]
but this condition is insufficient to conclude equality of the two Laplacians. The vectors \(\{\mathbf{u}_i^{\mathrm{ctx}}\}\) do not determine a symmetric matrix uniquely: there may exist nonzero symmetric components of \(\mathbf{L}_{\mathrm{snap}}^{(t)}-\mathbf{L}_{\mathrm{snap}}^{(s)}\) that are orthogonal to all rank-one projectors \(\mathbf{u}_i^{\mathrm{ctx}}\mathbf{u}_i^{\mathrm{ctx}\top}\). Therefore, equality of the embedding eigenvectors (or their span) does not imply equality of the underlying snapshot Laplacians, and the claimed injectivity step fails.

\begin{figure*}[h]
\centering
\includegraphics[width=1.0\textwidth]{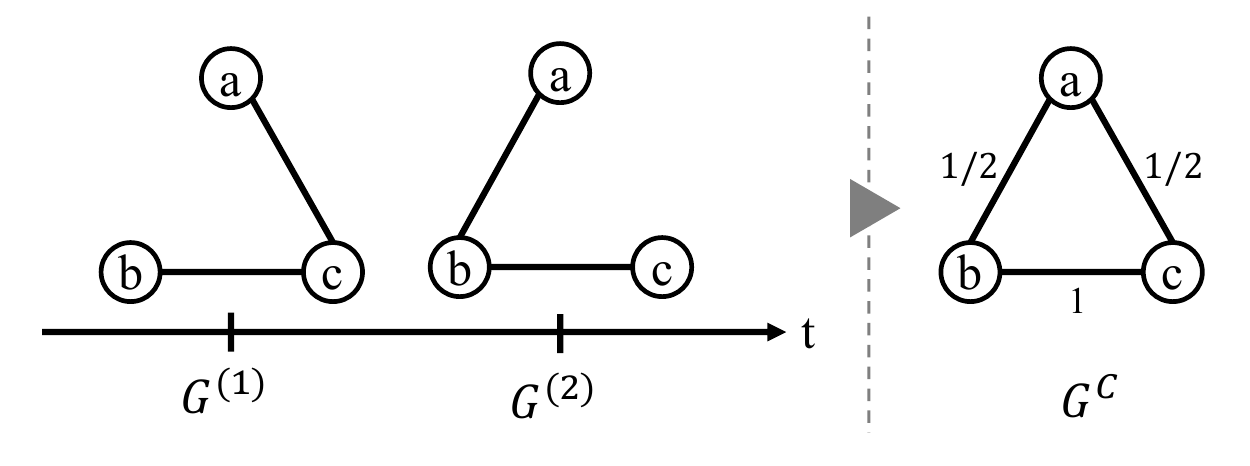}
\caption{Counterexample demonstrating the failure of Theorem~4.1 in \citet{zhao2021context}. Despite structural differences between \(G^{(1)}\) and \(G^{(2)}\), their spectral embeddings under the context-aware perturbation model are identical.}
\label{fig:counter-example}
\end{figure*}

\subsection{Counterexample}
To illustrate the failure of injectivity in the context-aware perturbation framework, we construct two non-isomorphic graphs whose context-aware spectral embeddings coincide. Consider two three-node graphs $G^{(1)}$ and $G^{(2)}$, shown in Figure~\ref{fig:counter-example}, with normalized snapshot Laplacians
\begin{align}
\mathbf{L}_{\mathrm{snap}}^{(1)} &=
\begin{bmatrix}
1 & 0 & -\sqrt{1/2} \\
0 & 1 & -\sqrt{1/2} \\
-\sqrt{1/2} & -\sqrt{1/2} & 1
\end{bmatrix}, \quad
\mathbf{L}_{\mathrm{snap}}^{(2)} =
\begin{bmatrix}
1 & -\sqrt{1/2} & 0 \\
-\sqrt{1/2} & 1 & -\sqrt{1/2} \\
0 & -\sqrt{1/2} & 1
\end{bmatrix}.
\label{eq:laplacian12}
\end{align}
These graphs are not isomorphic, as their adjacency patterns differ.

The context Laplacian is defined as the entrywise average of the snapshot Laplacians,
\[
\mathbf{L}_{\mathrm{ctx}}
=
\frac{1}{2}\left(
\mathbf{L}_{\mathrm{snap}}^{(1)}+\mathbf{L}_{\mathrm{snap}}^{(2)}
\right)
=
\begin{bmatrix}
1 & -\sqrt{1/6} & -\sqrt{1/6} \\
-\sqrt{1/6} & 1 & -2/3 \\
-\sqrt{1/6} & -2/3 & 1
\end{bmatrix}.
\]
Its eigendecomposition yields the eigenpairs $\lambda_1^{\mathrm{ctx}}=0$ with $\mathbf{u}_1^{\mathrm{ctx}}=(\tfrac{1}{2},\sqrt{\tfrac{3}{8}},\sqrt{\tfrac{3}{8}})^\top$, $\lambda_2^{\mathrm{ctx}}=\tfrac{4}{3}$ with $\mathbf{u}_2^{\mathrm{ctx}}=(-\tfrac{\sqrt{3}}{2},\sqrt{\tfrac{1}{8}},\sqrt{\tfrac{1}{8}})^\top$, and $\lambda_3^{\mathrm{ctx}}=\tfrac{5}{3}$ with $\mathbf{u}_3^{\mathrm{ctx}}=(0,-\tfrac{1}{\sqrt{2}},\tfrac{1}{\sqrt{2}})^\top$.

Define the perturbation matrices $\boldsymbol{\Delta}^{(t)}=\mathbf{L}_{\mathrm{snap}}^{(t)}-\mathbf{L}_{\mathrm{ctx}}$. Explicitly,
\begin{align}
\boldsymbol{\Delta}^{(1)} &=
\begin{bmatrix}
0 & \sqrt{1/6} & \sqrt{1/6}-\sqrt{1/2} \\
\sqrt{1/6} & 0 & 2/3-\sqrt{1/2} \\
\sqrt{1/6}-\sqrt{1/2} & 2/3-\sqrt{1/2} & 0
\end{bmatrix}, \\
\boldsymbol{\Delta}^{(2)} &=
\begin{bmatrix}
0 & \sqrt{1/6}-\sqrt{1/2} & \sqrt{1/6} \\
\sqrt{1/6}-\sqrt{1/2} & 0 & 2/3-\sqrt{1/2} \\
\sqrt{1/6} & 2/3-\sqrt{1/2} & 0
\end{bmatrix}.
\label{eq:deltaL12}
\end{align}

We apply the first-order eigenvalue perturbation formula
\begin{equation}
\Delta\lambda_i^{(t)}
=
\mathbf{u}_i^{\mathrm{ctx}\top}\boldsymbol{\Delta}^{(t)}\mathbf{u}_i^{\mathrm{ctx}},
\qquad i=1,2,3.
\label{eq:eigval-perturbation}
\end{equation}
Substituting $\boldsymbol{\Delta}^{(1)}$ into \eqref{eq:eigval-perturbation} and evaluating the quadratic forms yields
\begin{align*}
(\Delta\lambda_1^{(1)}, \Delta\lambda_2^{(1)}, \Delta\lambda_3^{(1)})
=
\left(
1 - \tfrac{3\sqrt{2}}{8} - \tfrac{\sqrt{3}}{4},\;
\tfrac{\sqrt{3}}{4} - \tfrac{1}{3} - \tfrac{\sqrt{2}}{8},\;
\tfrac{\sqrt{2}}{2} - \tfrac{2}{3}
\right).
\end{align*}
Repeating the same computation with $\boldsymbol{\Delta}^{(2)}$ gives
\begin{align*}
(\Delta\lambda_1^{(2)}, \Delta\lambda_2^{(2)}, \Delta\lambda_3^{(2)})
=
\left(
1 - \tfrac{3\sqrt{2}}{8} - \tfrac{\sqrt{3}}{4},\;
\tfrac{\sqrt{3}}{4} - \tfrac{1}{3} - \tfrac{\sqrt{2}}{8},\;
\tfrac{\sqrt{2}}{2} - \tfrac{2}{3}
\right).
\end{align*}
Thus, the first-order eigenvalue shifts coincide for $t=1$ and $t=2$.

An analogous substitution into the first-order eigenvector perturbation formula
\begin{equation}
\Delta\mathbf{u}_i^{(t)}
=
\sum_{j\neq i}
\frac{\mathbf{u}_j^{\mathrm{ctx}\top}\boldsymbol{\Delta}^{(t)}\mathbf{u}_i^{\mathrm{ctx}}}{\lambda_i^{\mathrm{ctx}}-\lambda_j^{\mathrm{ctx}}}\,\mathbf{u}_j^{\mathrm{ctx}}
\label{eq:eigvec-perturbation}
\end{equation}
shows that the first-order eigenvector corrections are also identical for $t=1$ and $t=2$. Consequently, the perturbed eigenvalue matrices coincide, $\boldsymbol{\Lambda}^{(1)}=\boldsymbol{\Lambda}^{(2)}$, and the corresponding context-aware embedding eigenvector matrices satisfy $\mathbf{U}_{\mathrm{ctx}}^{(1)}=\mathbf{U}_{\mathrm{ctx}}^{(2)}$.

Therefore, under the context-aware perturbation model, two non-isomorphic graphs produce identical spectral embeddings. This contradicts the injectivity claim of Theorem~4.1 in \citet{zhao2021context} and demonstrates that the perturbative construction fails to uniquely encode snapshot level graph structure, even in this minimal three-node example.

\section{On Theorem~5 of \citet{davis2023simple}}

Theorem~5 of \citet{davis2023simple} claims that the dynamic embeddings \(\hat{\mathbf{Y}}\) produced by the dilated unfolded embedding are both cross-sectional (spatially) and longitudinal (temporally) stable. We show that the key permutation argument used to justify this claim fails: the required permutation matrix does not, in general, map the class of dilated unfolded adjacency matrices to itself. Consequently, the asserted invariance (and hence the theorem) does not hold in full generality.

Let \(\mathcal{A}\in\{0,1\}^{(n+nT)\times(n+nT)}\) denote the dilated unfolded adjacency matrix, and let \(\mathbb{A}\) denote the set of all matrices with this dilated unfolded structure. Davis et al.~(2023) argue that for a permutation matrix \(\boldsymbol{\Pi}\) that swaps node \(i\) at time \(u\) with node \(j\) at time \(t\), one has
\begin{align*}
\sum_{\mathbf{a}\in\mathbb{A}}
&\Pr\!\left(
  \hat{\mathbf{Y}}_{i:}^{(s)} = v_1,\ 
  \hat{\mathbf{Y}}_{j:}^{(t)} = v_2
  \,\middle|\, \mathcal{A} = \mathbf{a}
\right)\Pr(\mathcal{A}=\mathbf{a})
\\
&=
\sum_{\mathbf{a}\in\mathbb{A}}
\Pr\!\left(
  \hat{\mathbf{Y}}_{i:}^{(s)} = v_2,\ 
  \hat{\mathbf{Y}}_{j:}^{(t)} = v_1
  \,\middle|\, \mathcal{A} = \boldsymbol{\Pi}\mathbf{a}\boldsymbol{\Pi}^\top
\right)
\Pr\!\left(\mathcal{A} = \boldsymbol{\Pi}\mathbf{a}\boldsymbol{\Pi}^\top\right).
\end{align*}
For this identity to be valid, it is necessary that \(\boldsymbol{\Pi}\mathbf{a}\boldsymbol{\Pi}^\top\in\mathbb{A}\) whenever \(\mathbf{a}\in\mathbb{A}\), i.e., that conjugation by \(\boldsymbol{\Pi}\) preserves the dilated unfolded structure.

However, the permutation \(\boldsymbol{\Pi}\) required by the argument must swap the embedding of node \(i\) at time \(u\) with that of node \(j\) at time \(t\), while leaving all other components unchanged, i.e.,
\[
\boldsymbol{\Pi}
\begin{bmatrix}
\hat{\mathbf{X}} \\
\hat{\mathbf{Y}}_{1:}^{(1)} \\
\vdots \\
\hat{\mathbf{Y}}_{i:}^{(s)} \\
\vdots \\
\hat{\mathbf{Y}}_{j:}^{(t)} \\
\vdots \\
\hat{\mathbf{Y}}_{n:}^{(t)}
\end{bmatrix}
=
\begin{bmatrix}
\hat{\mathbf{X}} \\
\hat{\mathbf{Y}}_{1:}^{(1)} \\
\vdots \\
\hat{\mathbf{Y}}_{j:}^{(t)} \\
\vdots \\
\hat{\mathbf{Y}}_{i:}^{(s)} \\
\vdots \\
\hat{\mathbf{Y}}_{n:}^{(t)}
\end{bmatrix}.
\]
Such a permutation necessarily places ones at indices \((nt+i,nu+j)\) and \((nu+j,nt+i)\), has identity entries elsewhere except at these indices, and zeros in all remaining positions. But conjugating a dilated unfolded adjacency matrix \(\mathbf{a}\in\mathbb{A}\) by this \(\boldsymbol{\Pi}\) generally destroys the dilation pattern (i.e., the prescribed block placement of within-time and cross-time entries). Therefore \(\boldsymbol{\Pi}\mathbf{a}\boldsymbol{\Pi}^\top\) need not belong to \(\mathbb{A}\), and the set invariance claim
\[
\{\mathbf{a}\in\mathbb{A}:\boldsymbol{\Pi}\mathbf{a}\boldsymbol{\Pi}^\top\in\mathbb{A}\}=\mathbb{A}
\]
is false in general. The equality above is thus not justified, and the subsequent stability conclusion does not follow.

This structural failure yields violations of both spatial and temporal stability. For spatial stability, one may take
\[
\mathbf{P}^{(1)}=\begin{bmatrix}0.3&0.3\\0.3&0.3\end{bmatrix},\qquad
\mathbf{P}^{(2)}=\begin{bmatrix}1&1\\1&0\end{bmatrix}.
\]
For temporal stability, a counterexample is given by
\[
\mathbf{P}^{(1)}=\begin{bmatrix}1&0.3\\0.3&1\end{bmatrix},\qquad
\mathbf{P}^{(2)}=\begin{bmatrix}1&0.3\\0.3&0\end{bmatrix}.
\]
In both cases, the dilated unfolded embedding fails to satisfy the corresponding stability condition, contradicting the claim of Theorem~5.

\section{Proof of Theorem~\ref{thm:convergence-n1-main}}

\begin{lem}[Lower bound for the minimum degree]\label{lem:d-min}
For each \(t\in\{1,\ldots,T\}\), the minimum population and sample degrees satisfy
\[
\tilde{\mathbf{d}}_{\min}^{(t)}:=\min_{i\in [N]}\tilde{\mathbf{d}}_{i}^{(t)} = \Omega(\rho n)\quad\text{a.s.},\qquad
\mathbf{d}_{\min}^{(t)}:=\min_{i\in [N]}\mathbf{d}_{i}^{(t)}=\Omega(\rho n)\quad\text{a.s.}
\]
\end{lem}

\begin{proof}
We treat the population degrees and the sample degrees separately.

Since $\tilde{\mathbf{d}}^{(t)}=\mathbf{P}^{(t)}\mathbf{1}$, we have
$\tilde{\mathbf{d}}_i^{(t)}=\sum_{j=1}^n \mathbf{P}^{(t)}_{ij}$.
Under the DSBM, $\mathbf{P}^{(t)}_{ij}=\rho\,\mathbf{B}^{(t)}_{\mathbf{z}_i^{(t)}\,\mathbf{z}_j^{(t)}}$, and therefore, for every $i$,
\[
\tilde{\mathbf{d}}_i^{(t)}
=\rho\sum_{j=1}^n \mathbf{B}^{(t)}_{\mathbf{z}_i^{(t)}\,\mathbf{z}_j^{(t)}}
\ge \rho n\,\mathbf{B}^{(t)}_{\min}.
\]
Hence $\tilde{\mathbf{d}}_{\min}^{(t)}\ge \rho n\,\mathbf{B}^{(t)}_{\min}$, and since
$\mathbf{B}^{(t)}_{\min}>0$ by assumption, it follows that
$\tilde{\mathbf{d}}_{\min}^{(t)}=\Omega(\rho n)$ (deterministically, hence a.s.).

Fix $i$. Because $\mathbf{d}_i^{(t)}=\sum_{j=1}^n \mathbf{A}^{(t)}_{ij}$ and
$\mathbf{A}^{(t)}_{ij}\sim\mathrm{Bernoulli}(\mathbf{P}^{(t)}_{ij})$ are independent conditional on $\mathbf{P}^{(t)}$,
$\mathbb{E}[\mathbf{d}_i^{(t)}]=\tilde{\mathbf{d}}_i^{(t)}$.
Moreover,
\[
\mathrm{Var}(\mathbf{A}^{(t)}_{ij})
=\mathbf{P}^{(t)}_{ij}(1-\mathbf{P}^{(t)}_{ij})
\le \mathbf{P}^{(t)}_{ij}
\le \rho,
\]
so
\[
\sum_{j=1}^n \mathrm{Var}(\mathbf{A}^{(t)}_{ij})\le n\rho.
\]
Bernstein's inequality implies that for any $s>0$,
\[
\Pr\!\left(\left|\mathbf{d}_i^{(t)}-\tilde{\mathbf{d}}_i^{(t)}\right|\ge s\right)
\le
2\exp\!\left(-\frac{s^2/2}{n\rho+s/3}\right).
\]
Applying a union bound over $i\in\{1,\ldots,n\}$ yields
\[
\Pr\!\left(\max_{i}\left|\mathbf{d}_i^{(t)}-\tilde{\mathbf{d}}_i^{(t)}\right|\ge s\right)
\le
2n\exp\!\left(-\frac{s^2/2}{n\rho+s/3}\right).
\]

Fix $\alpha>0$. To guarantee that the right-hand side is at most $n^{-\alpha}$, it suffices to choose $s$ such that
\[
2n\exp\!\left(-\frac{s^2/2}{n\rho+s/3}\right)\le n^{-\alpha}.
\]
Taking logarithms and rearranging yields
\[
\frac{s^2}{2\left(n\rho+s/3\right)}\ge (\alpha+1)\log n+\log 2.
\]
Since $\rho=\omega(\log n/n)$, one can verify that
\[
s=\Theta\!\left(\rho^{1/2}n^{1/2}\log^{1/2}n\right)
\]
satisfies this inequality. Moreover, since $\rho=\omega(\log n/n)$, we also have
\[
\sqrt{n\rho\log n}=o(\rho n),
\]
and thus
\[
\max_{i}\left|\mathbf{d}_i^{(t)}-\tilde{\mathbf{d}}_i^{(t)}\right|
=o(\rho n)\quad\text{a.s.}
\]

Combining this with $\tilde{\mathbf{d}}_{\min}^{(t)}=\Omega(\rho n)$ gives
\[
\mathbf{d}_{\min}^{(t)}
\ge
\tilde{\mathbf{d}}_{\min}^{(t)}
-\max_{i}\left|\mathbf{d}_i^{(t)}-\tilde{\mathbf{d}}_i^{(t)}\right|
=\Omega(\rho n)\quad\text{a.s.}
\]
\end{proof}

By Lemma~\ref{lem:d-min}, the degree matrices \(\tilde{\mathbf{D}}^{(t)}\) and \(\mathbf{D}^{(t)}\) are almost surely invertible, so the unfolded normalized Laplacians \(\tilde{\mathcal{L}}_{\mathrm{n1}}\) and \(\mathcal{L}_{\mathrm{n1}}\) are well defined.

\begin{lem}[Deviation bound for the unfolded normalized Laplacian]\label{lem:l-dif}
\[
\|\mathcal{L}_{\mathrm{n1}}-\tilde{\mathcal{L}}_{\mathrm{n1}}\|_2
=\mathcal{O}\!\left((\rho n)^{-1/2}\right)\quad\text{a.s.}
\]
\end{lem}

\begin{proof}
Write \(\mathcal{L}_{\mathrm{n1}}=[\mathbf{L}^{(1)}_{\mathrm{n1}}|\cdots|\mathbf{L}^{(T)}_{\mathrm{n1}}]\) and
\(\tilde{\mathcal{L}}_{\mathrm{n1}}=[\tilde{\mathbf{L}}^{(1)}_{\mathrm{n1}}|\cdots|\tilde{\mathbf{L}}^{(T)}_{\mathrm{n1}}]\). Then
\[
\mathcal{L}_{\mathrm{n1}}-\tilde{\mathcal{L}}_{\mathrm{n1}}
=
\big[\mathbf{L}^{(1)}_{\mathrm{n1}}-\tilde{\mathbf{L}}^{(1)}_{\mathrm{n1}}\mid \cdots \mid \mathbf{L}^{(T)}_{\mathrm{n1}}-\tilde{\mathbf{L}}^{(T)}_{\mathrm{n1}}\big].
\]
Using the triangle inequality for spectral norm,
\[
\|\mathcal{L}_{\mathrm{n1}}-\tilde{\mathcal{L}}_{\mathrm{n1}}\|_2
\le
\sum_{t=1}^{T}\|\mathbf{L}^{(t)}_{\mathrm{n1}}-\tilde{\mathbf{L}}^{(t)}_{\mathrm{n1}}\|_2,
\]
so it suffices to control \(\|\mathbf{L}^{(t)}_{\mathrm{n1}}-\tilde{\mathbf{L}}^{(t)}_{\mathrm{n1}}\|_2\) uniformly over \(t\).

Fix \(t\). Under the DSBM, \(\max_{i,j}\mathbf{P}^{(t)}_{ij}\le \rho\), and by Lemma~\ref{lem:d-min},
\[
\min(\mathbf{d}_{\min}^{(t)},\tilde{\mathbf{d}}_{\min}^{(t)})=\Omega(\rho n)
\quad \text{a.s.}
\]
Moreover, since \(\mathbf{B}^{(t)}_{\min}>0\) and \(\rho=\omega(\log n/n)\), there exist constants
\(N,c_0>0\) such that for all \(n\ge N\),
\[
n\max_{i,j}\mathbf{P}^{(t)}_{ij}
\ge n\rho\,\mathbf{B}^{(t)}_{\min}
\ge c_0 \log n.
\]
Therefore, Theorem~3.1 of \citet{deng2021strong} applies and yields
\[
\|\mathbf{L}^{(t)}_{\mathrm{n1}}-\tilde{\mathbf{L}}_{\mathrm{n1}}^{(t)}\|_2
=
\mathcal{O}\!\left(
\frac{(n\max_{i,j}\mathbf{P}^{(t)}_{ij})^{5/2}}
{\min(\mathbf{d}_{\min}^{(t)},\tilde{\mathbf{d}}_{\min}^{(t)})^3}
\right)
\quad\text{a.s.}
\]
Substituting \(n\max_{i,j}\mathbf{P}^{(t)}_{ij}\le n\rho\) and
\(\min(\mathbf{d}_{\min}^{(t)},\tilde{\mathbf{d}}_{\min}^{(t)})=\Omega(\rho n)\) gives
\[
\|\mathbf{L}^{(t)}_{\mathrm{n1}}-\tilde{\mathbf{L}}^{(t)}_{\mathrm{n1}}\|_2
=
\mathcal{O}\!\left(\frac{(n\rho)^{5/2}}{(\rho n)^3}\right)
=
\mathcal{O}\!\left((\rho n)^{-1/2}\right)
\quad\text{a.s.}
\]
Finally, combining over \(t=1,\ldots,T\) yields
\[
\|\mathcal{L}_{\mathrm{n1}}-\tilde{\mathcal{L}}_{\mathrm{n1}}\|_2
\le
\sum_{t=1}^{T}\|\mathbf{L}^{(t)}_{\mathrm{n1}}-\tilde{\mathbf{L}}^{(t)}_{\mathrm{n1}}\|_2
=
\mathcal{O}\!\left((\rho n)^{-1/2}\right)
\quad\text{a.s.},
\]
absorbing the (fixed) factor \(T\) into the \(\mathcal{O}(\cdot)\) term.
\end{proof}

\begin{lem}[SVD of the noise-free unfolded normalized Laplacian]\label{lem:svd}
Assume \(\mathbf{N}=\mathrm{diag}(\mathbf{n})\) is invertible and let \(\mathcal{S}=\mathrm{span}(\mathcal{Z})\).
The unfolded population operator
\(\tilde{\mathcal{L}}_{\mathrm{n1}}=[\tilde{\mathbf{L}}_{\mathrm{n1}}^{(1)}|\cdots|\tilde{\mathbf{L}}_{\mathrm{n1}}^{(T)}]\)
has the following singular-value structure:
\begin{itemize}
\item A singular value \(\sqrt{T}\) with multiplicity \(n-K\), whose left singular space is \(\mathcal{S}^{\perp}\).
\item The remaining \(K\) singular values are \(\tilde{\lambda}_1,\ldots,\tilde{\lambda}_K\). Their left singular vectors lie in \(\mathcal{S}\) and can be written as
\(\tilde{\mathbf{u}}^{(i)}=\mathcal{Z}\mathbf{N}^{-1/2}\tilde{\mathbf{x}}^{(i)}\),
where \(\{\tilde{\mathbf{x}}^{(i)}\}_{i=1}^K\) is an orthonormal set in \(\mathbb{R}^K\).
\end{itemize}
\end{lem}

\begin{proof}
We decompose $\mathbb{R}^n=\mathcal{S}\oplus\mathcal{S}^{\perp}$.

\textbf{Step 1: singular vectors in $\mathcal{S}^{\perp}$}
Let $\{\mathbf{x}^{(1)},\ldots,\mathbf{x}^{(n-K)}\}$ be an orthonormal basis of $\mathcal{S}^{\perp}$.
Fix $t\in\{1,\ldots,T\}$. Recall that
\[
\tilde{\mathbf{L}}_{\mathrm{n1}}^{(t)}
=
\mathbf{I}-\tilde{\mathbf{D}}^{(t)-1/2}\mathbf{P}^{(t)}\tilde{\mathbf{D}}^{(t)-1/2}.
\]
For $\mathbf{x}^{(i)}\in\mathcal{S}^{\perp}$, the $r$th coordinate of
$\tilde{\mathbf{D}}^{(t)-1/2}\mathbf{P}^{(t)}\tilde{\mathbf{D}}^{(t)-1/2}\mathbf{x}^{(i)}$
is
\begin{align*}
&\sum_{j=1}^n
\frac{\mathbf{P}^{(t)}_{rj}}{\sqrt{\tilde{d}_r^{(t)}\tilde{d}_j^{(t)}}}
\,\mathbf{x}^{(i)}_j \\
&=
\sum_{k=1}^{K}
\sum_{j:\mathbf{z}_j=k}
\frac{\rho\,\mathbf{B}^{(t)}_{\mathbf{z}_r^{(t)} c^{(t)}(k)}}
{\sqrt{\bigl(\sum_{m=1}^K n_m \rho\,\mathbf{B}^{(t)}_{\mathbf{z}_r^{(t)} c^{(t)}(m)}\bigr)
       \bigl(\sum_{m=1}^K n_m \rho\,\mathbf{B}^{(t)}_{c^{(t)}(k) c^{(t)}(m)}\bigr)}}
\,\mathbf{x}^{(i)}_j \\
&=
\sum_{k=1}^K
\frac{\mathbf{B}^{(t)}_{\mathbf{z}_r^{(t)} c^{(t)}(k)}}
{\sqrt{\bigl(\sum_{m=1}^K n_m \mathbf{B}^{(t)}_{\mathbf{z}_r^{(t)} c^{(t)}(m)}\bigr)
       \bigl(\sum_{m=1}^K n_m \mathbf{B}^{(t)}_{c^{(t)}(k)c^{(t)}(m)}\bigr)}}
\sum_{j:\mathbf{z}_j=k} \mathbf{x}^{(i)}_j.
\end{align*}
Since $\mathbf{x}^{(i)}\in\mathcal{S}^{\perp}$, we have
$\sum_{j:\mathbf{z}_j=k} \mathbf{x}^{(i)}_j=0$ for every $k$, and therefore
\[
\tilde{\mathbf{D}}^{(t)-1/2}\mathbf{P}^{(t)}\tilde{\mathbf{D}}^{(t)-1/2}\mathbf{x}^{(i)}
=\mathbf{0}.
\]
Hence
\[
\tilde{\mathbf{L}}_{\mathrm{n1}}^{(t)}\mathbf{x}^{(i)}=\mathbf{x}^{(i)}.
\]
Since $\tilde{\mathbf{L}}_{\mathrm{n1}}^{(t)}$ is symmetric,
\[
\tilde{\mathbf{L}}_{\mathrm{n1}}^{(t)}
\tilde{\mathbf{L}}_{\mathrm{n1}}^{(t)\top}\mathbf{x}^{(i)}=\mathbf{x}^{(i)}.
\]
Summing over $t$ yields
\[
\tilde{\mathcal{L}}_{\mathrm{n1}}\tilde{\mathcal{L}}_{\mathrm{n1}}^\top\mathbf{x}^{(i)}
=\sum_{t=1}^T\mathbf{x}^{(i)}=T\,\mathbf{x}^{(i)}.
\]
Thus each $\mathbf{x}^{(i)}$ is a left singular vector with singular value $\sqrt{T}$.
Since $\dim(\mathcal{S}^{\perp})=n-K$, this singular value has multiplicity $n-K$.

\textbf{Step 2: singular vectors in $\mathcal{S}$.}
Let $\{\tilde{\mathbf{x}}^{(1)},\ldots,\tilde{\mathbf{x}}^{(K)}\}$ be an orthonormal basis of $\mathbb{R}^K$
and define
\[
\tilde{\mathbf{u}}^{(i)}=\mathcal{Z}\mathbf{N}^{-1/2}\tilde{\mathbf{x}}^{(i)}.
\]
Using $\mathcal{Z}^\top\mathcal{Z}=\mathbf{N}$, we obtain
\[
\tilde{\mathbf{u}}^{(i)\top}\tilde{\mathbf{u}}^{(j)}=\delta_{ij},
\]
so $\{\tilde{\mathbf{u}}^{(i)}\}_{i=1}^K$ is an orthonormal basis of $\mathcal{S}$.

Fix $t$. The $(r,\ell)$ entry of
$\tilde{\mathbf{D}}^{(t)-1/2}\mathbf{P}^{(t)}\tilde{\mathbf{D}}^{(t)-1/2}\mathcal{Z}$
is
\begin{align*}
\sum_{j=1}^n
\frac{\mathbf{P}^{(t)}_{rj}}{\sqrt{\tilde{d}_r^{(t)}\tilde{d}_j^{(t)}}}\,\mathcal{Z}_{j\ell}
&=
\sum_{j:\mathbf{z}_j=\ell}
\frac{\rho\,\mathbf{B}^{(t)}_{\mathbf{z}_r^{(t)}c^{(t)}(\ell)}}
{\sqrt{\bigl(\sum_{m=1}^K n_m \rho\,\mathbf{B}^{(t)}_{\mathbf{z}_r^{(t)} c^{(t)}(m)}\bigr)
       \bigl(\sum_{m=1}^K n_m \rho\,\mathbf{B}^{(t)}_{c^{(t)}(\ell) c^{(t)}(m)}\bigr)}} \\
&=
\frac{n_\ell\,\mathbf{B}^{(t)}_{\mathbf{z}_r^{(t)}c^{(t)}(\ell)}}
{\sqrt{\bigl(\sum_{m=1}^K n_m \mathbf{B}^{(t)}_{\mathbf{z}_r^{(t)} c^{(t)}(m)}\bigr)
       \bigl(\sum_{m=1}^K n_m \mathbf{B}^{(t)}_{c^{(t)}(\ell) c^{(t)}(m)}\bigr)}}.
\end{align*}
Comparing with the $(k,\ell)$ entry of
$\mathbf{I}-\mathbf{N}^{-1/2}\tilde{\mathbf{M}}_{\mathrm{n1}}^{(t)}\mathbf{N}^{1/2}$ shows that
\[
\tilde{\mathbf{D}}^{(t)-1/2}\mathbf{P}^{(t)}\tilde{\mathbf{D}}^{(t)-1/2}\mathcal{Z}
=
\mathcal{Z}\bigl(\mathbf{I}-\mathbf{N}^{-1/2}\tilde{\mathbf{M}}_{\mathrm{n1}}^{(t)}\mathbf{N}^{1/2}\bigr).
\]
Therefore,
\[
\tilde{\mathbf{L}}_{\mathrm{n1}}^{(t)}\tilde{\mathbf{u}}^{(i)}
=
\mathcal{Z}\mathbf{N}^{-1/2}\tilde{\mathbf{M}}_{\mathrm{n1}}^{(t)}\tilde{\mathbf{x}}^{(i)}.
\]
Since $\tilde{\mathbf{L}}_{\mathrm{n1}}^{(t)}$ is symmetric,
\[
\tilde{\mathbf{L}}_{\mathrm{n1}}^{(t)\top}
\tilde{\mathbf{L}}_{\mathrm{n1}}^{(t)}\tilde{\mathbf{u}}^{(i)}
=
\mathcal{Z}\mathbf{N}^{-1/2}\tilde{\mathbf{M}}_{\mathrm{n1}}^{(t)2}\tilde{\mathbf{x}}^{(i)}.
\]
Summing over $t$ gives
\[
\tilde{\mathcal{L}}_{\mathrm{n1}}\tilde{\mathcal{L}}_{\mathrm{n1}}^\top\tilde{\mathbf{u}}^{(i)}
=
\mathcal{Z}\mathbf{N}^{-1/2}
\Bigl(\sum_{t=1}^T\tilde{\mathbf{M}}_{\mathrm{n1}}^{(t)2}\Bigr)
\tilde{\mathbf{x}}^{(i)}.
\]
Recall $\tilde{\mathcal{M}}_{\mathrm{n1}}$ satisfy
$\tilde{\mathcal{M}}_{\mathrm{n1}}\tilde{\mathcal{M}}_{\mathrm{n1}}^\top=\sum_{t=1}^T\tilde{\mathbf{M}}_{\mathrm{n1}}^{(t)2}$,
and $\tilde{\mathbf{x}}^{(i)}$ is a left singular vector of $\tilde{\mathcal{M}}_{\mathrm{n1}}$
with singular value $\tilde{\lambda}_i$. Therefore
\[
\tilde{\mathcal{L}}_{\mathrm{n1}}\tilde{\mathcal{L}}_{\mathrm{n1}}^\top\tilde{\mathbf{u}}^{(i)}
=
\tilde{\lambda}_i^2\,\tilde{\mathbf{u}}^{(i)}.
\]
Thus $\tilde{\mathbf{u}}^{(i)}$ is a left singular vector of
$\tilde{\mathcal{L}}_{\mathrm{n1}}$ with singular value $\tilde{\lambda}_i$.
\end{proof}

\begin{cor}[Bounds for the population singular values \(\tilde{\sigma}_i\)]
\label{cor:st}
\[
\tilde{\sigma}_n=\mathcal{O}(1),\qquad 
\tilde{\sigma}_2-\tilde{\sigma}_1=\Omega(1),\qquad 
\tilde{\sigma}_{K+1}-\tilde{\sigma}_K=\Omega(1)\quad\text{a.s.}
\]
\end{cor}
\begin{proof}
Since the labels $\{\mathbf{z}_i\}_{i=1}^n$ are drawn independently with
$\Pr(\mathbf{z}_i=k)=\pi_k$, the random variables $\mathbf{1}\{\mathbf{z}_i=k\}$ are i.i.d.\
Bernoulli$(\pi_k)$. Hence
\[
\mathbb{E}[n_k]
=\sum_{i=1}^{n}\mathbb{E}[\mathbf{1}\{\mathbf{z}_i=k\}]
=\sum_{i=1}^{n}\pi_k
=n\pi_k.
\]
By Hoeffding's inequality, for any $s>0$,
\[
\Pr\!\left(\big|n_k-n\pi_k\big|\ge s\right)
\le 2\exp\!\left(-\frac{s^2}{n}\right).
\]
Fix $\alpha>0$ and choose $s=\sqrt{(\alpha+1)n\log n}$. Then
\[
\Pr\!\left(\big|n_k-n\pi_k\big|\ge \sqrt{(\alpha+1)n\log n}\right)
\le 2n^{-(\alpha+1)}.
\]
Applying a union bound over $k\in\{1,\ldots,K\}$ yields
\[
\Pr\!\left(\max_{k\in\{1,\ldots,K\}}\big|n_k-n\pi_k\big|
\ge \sqrt{(\alpha+1)n\log n}\right)
\le 2K\,n^{-(\alpha+1)}.
\]
Since $\sum_{n=1}^{\infty}2K\,n^{-(\alpha+1)}<\infty$, the Borel--Cantelli lemma implies
\[
\max_{k\in\{1,\ldots,K\}}\big|n_k-n\pi_k\big|
=\mathcal{O}\!\left(n^{1/2}\log^{1/2}n\right)
\quad\text{a.s.}
\]
Because $\pi_k>0$ for all $k\in\{1,\ldots,K\}$, it follows that
\[
n_k
=n\pi_k+\mathcal{O}\!\left(n^{1/2}\log^{1/2}n\right)
=\Omega(n)
\quad\text{a.s.}
\]
Consequently, the matrix $\mathbf{N}=\mathrm{diag}(n_1,\ldots,n_K)$ is invertible almost surely.

By Lemma~\ref{lem:svd}, the noise-free unfolded normalized Laplacian
$\tilde{\mathcal{L}}_{\mathrm{n1}}$ has singular value $\sqrt{T}$ with multiplicity
$n-K$, and its remaining $K$ singular values coincide with those of the
community level matrix $\tilde{\mathcal{M}}_{\mathrm{n1}}$.

By the triangle inequality,
\[
\|\tilde{\mathcal{M}}_{\mathrm{n1}}-\bar{\mathcal{M}}_{\mathrm{n1}}\|_2
\le \sum_{t=1}^{T}\|\tilde{\mathbf{M}}^{(t)}_{\mathrm{n1}}-\bar{\mathbf{M}}_{\mathrm{n1}}^{(t)}\|_2.
\]
Following the argument of Theorem~3.1 in \citet{deng2021strong}, for each
$t\in\{1,\ldots,T\}$,
\begin{align*}
\|\tilde{\mathbf{M}}^{(t)}_{\mathrm{n1}}-\bar{\mathbf{M}}_{\mathrm{n1}}^{(t)}\|_2
&\le
\frac{\|\tilde{\mathbf{Q}}^{(t)}-\bar{\mathbf{Q}}^{(t)}\|_2}{\tilde d^{(t)}} \\
&\quad+
\frac{K\bigl(\|\tilde{\mathbf{Q}}^{(t)}-\bar{\mathbf{Q}}^{(t)}\|_2
+2\|\bar{\mathbf{Q}}^{(t)}\|_2\bigr)
\|\tilde{\mathbf{Q}}^{(t)}-\bar{\mathbf{Q}}^{(t)}\|_2}
{2\min(\tilde d^{(t)},\bar d^{(t)})^3},
\end{align*}
where $\tilde d^{(t)}$ and $\bar d^{(t)}$ denote the minimum degrees of
$\tilde{\mathbf{Q}}^{(t)}$ and $\bar{\mathbf{Q}}^{(t)}$, respectively.

Since $\bar{\mathbf{Q}}^{(t)}$ depends only on $\boldsymbol{\pi}$ and
$\mathbf{B}^{(t)}$, we have
\[
\|\bar{\mathbf{Q}}^{(t)}\|_2=\mathcal{O}(1),
\qquad
\bar d^{(t)}=\Omega(1).
\]
Moreover, using $n_k=\Omega(n)$ a.s.\ and $\mathbf{B}_{\min}^{(t)}>0$,
we obtain
\begin{align*}
\tilde d^{(t)}
&=\min_{k\in\{1,\ldots,K\}}
\sum_{\ell=1}^{K}\frac{n_k}{n}\frac{n_\ell}{n}\mathbf{B}_{c^{(t)}(k)c^{(t)}(\ell)}^{(t)} \\
&\ge
\mathbf{B}_{\min}^{(t)}\frac{\min_{k\in\{1,\ldots,K\}}n_k}{n}
=\Omega(1)
\quad\text{a.s.}
\end{align*}

Next, since $\|\cdot\|_2\le\|\cdot\|_F$,
\begin{align*}
\|\tilde{\mathbf{Q}}^{(t)}-\bar{\mathbf{Q}}^{(t)}\|_2
&\le
\left(
\sum_{k=1}^{K}\sum_{\ell=1}^{K}
\left(
\frac{n_k}{n}\frac{n_\ell}{n}\mathbf{B}_{c^{(t)}(k)c^{(t)}(\ell)}^{(t)}
-\pi_k\pi_\ell\mathbf{B}_{c^{(t)}(k)c^{(t)}(\ell)}^{(t)}
\right)^2
\right)^{1/2} \\
&=
2K\,\frac{\max_{k\in\{1,\ldots,K\}}|n_k-n\pi_k|}{n}.
\end{align*}
Using the earlier concentration bound,
\[
\|\tilde{\mathbf{Q}}^{(t)}-\bar{\mathbf{Q}}^{(t)}\|_2
=\mathcal{O}\!\left(\frac{\log^{1/2}n}{n^{1/2}}\right)
\quad\text{a.s.}
\]
Substituting this into the previous inequality yields
\[
\|\tilde{\mathbf{M}}_{\mathrm{n1}}^{(t)}-\bar{\mathbf{M}}_{\mathrm{n1}}^{(t)}\|_2
=\mathcal{O}\!\left(\frac{\log^{1/2}n}{n^{1/2}}\right)
\quad\text{a.s.}
\]
and hence, since $T$ is fixed,
\[
\|\tilde{\mathcal{M}}_{\mathrm{n1}}-\bar{\mathcal{M}}_{\mathrm{n1}}\|_2
=\mathcal{O}\!\left(\frac{\log^{1/2}n}{n^{1/2}}\right)
\quad\text{a.s.}
\]

By Weyl's inequality,
\[
|\tilde{\lambda}_i-\bar{\lambda}_i|
\le \|\tilde{\mathcal{M}}_{\mathrm{n1}}-\bar{\mathcal{M}}_{\mathrm{n1}}\|_2,
\qquad i\in\{1,\ldots,K\}.
\]
Since $0\le\bar{\lambda}_1<\bar{\lambda}_2\le\cdots\le\bar{\lambda}_K<\sqrt{T}$, define
\[
\Delta_{12}:=\bar{\lambda}_2-\bar{\lambda}_1>0,
\qquad
\Delta_{K,T}:=\sqrt{T}-\bar{\lambda}_K>0.
\]
On the almost sure event where $\|\tilde{\mathcal{M}}_{\mathrm{n1}}-\bar{\mathcal{M}}_{\mathrm{n1}}\|_2\to0$, there exists
$n_0$ such that for all $n\ge n_0$,
\[
\|\tilde{\mathcal{M}}_{\mathrm{n1}}-\bar{\mathcal{M}}_{\mathrm{n1}}\|_2
\le \min\!\left\{\frac{\Delta_{12}}{3},\frac{\Delta_{K,T}}{2}\right\}.
\]
Hence, for all $n\ge n_0$,
\[
\tilde{\sigma}_1<\frac{2\bar{\lambda}_1+\bar{\lambda}_2}{3},
\qquad
\frac{\bar{\lambda}_1+2\bar{\lambda}_2}{3}<\tilde{\sigma}_i<\frac{\bar{\lambda}_K+\sqrt{T}}{2},
\quad i=2,\ldots,K,
\]
and $\tilde{\sigma}_{K+1}=\cdots=\tilde{\sigma}_n=\sqrt{T}$.
This completes the proof.
\end{proof}

\begin{cor}[Bounds for the empirical singular values \(\sigma_i\)]
\label{cor:s}
\[
\sigma_n=\mathcal{O}(1),\qquad \sigma_2=\Omega(1)\quad\text{a.s.}
\]
\end{cor}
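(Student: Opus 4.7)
The plan is to combine Weyl's inequality for singular values with the spectral norm deviation bound from Lemma~\ref{lem:l-dif} and the structural characterization of the singular values of \(\nLt\) established in Corollary~\ref{cor:st}. At heart, the statement says that the upper and lower ends of the relevant portion of the spectrum of \(\nLt\) survive under the \(\o{1}\) perturbation that separates \(\nL\) from \(\nLt\).

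First, I would invoke Weyl's inequality for singular values to conclude that \(\abs{\s_i - \st_i} \leq \sn{\nL - \nLt}\) for every \(i \in [n]\), using the same (increasing) ordering convention \(\s_1 \leq \cdots \leq \s_n\) and \(\st_1 \leq \cdots \leq \st_n\) adopted in Corollary~\ref{cor:st}. By Lemma~\ref{lem:l-dif}, the perturbation is \(\sn{\nL - \nLt} = \O{1/(\rho^{1/2} n^{1/2})}\) a.s., which is \(\o{1}\) a.s.\ under the standing sparsity assumption \(\rho = \om{\log n / n}\).

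For the upper bound on \(\s_n\), I would combine this perturbation bound with \(\st_n = \sqrt{T} = \O{1}\) from Corollary~\ref{cor:st} to obtain \(\s_n \leq \st_n + \sn{\nL - \nLt} = \O{1}\) a.s. For the lower bound on \(\s_2\), I would use the estimate \(\st_2 > (\lb_1 + 2\lb_2)/3\) from Corollary~\ref{cor:st}. Since \(\lb_2 > \lb_1 \geq 0\) is assumed in Theorem~\ref{thm:stability-n1} and the \(\lb_i\) are singular values of the \(n\)-independent matrix \(\Mb\), the constant \((\lb_1 + 2\lb_2)/3\) is strictly positive and independent of \(n\); hence \(\st_2 = \Om{1}\) a.s., and Weyl then yields \(\s_2 \geq \st_2 - \sn{\nL - \nLt} = \Om{1}\) a.s.

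No substantial obstacle is expected, since the argument reuses ingredients already assembled in the proof of Corollary~\ref{cor:st}. The only minor subtleties are ensuring that the indexing conventions of the singular values of \(\nL\) and \(\nLt\) match so that Weyl applies index-by-index, and verifying that the constant-order gap \(\lb_2 > \lb_1\) assumed in Theorem~\ref{thm:stability-n1} yields a constant lower bound on \(\st_2\) that is not eroded by the \(\o{1}\) perturbation.
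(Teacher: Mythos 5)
Your proposal is correct and matches the paper's proof essentially verbatim: both apply Weyl's inequality in the form \(\st_i - \sn{\nL-\nLt} \leq \s_i \leq \st_i + \sn{\nL-\nLt}\) and then conclude via Lemma~\ref{lem:l-dif} and Corollary~\ref{cor:st}. The additional details you supply (that \(\st_n=\sqrt{T}=\O{1}\), that \(\st_2 > (\lb_1+2\lb_2)/3 = \Om{1}\), and that the perturbation is \(\o{1}\)) are exactly the inputs the paper's one-line argument implicitly uses.
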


\begin{proof}
By Weyl’s inequality,
\[
\tilde{\sigma}_i-\|\mathcal{L}_{\mathrm{n1}}-\tilde{\mathcal{L}}_{\mathrm{n1}}\|_2
\le \sigma_i
\le \tilde{\sigma}_i+\|\mathcal{L}_{\mathrm{n1}}-\tilde{\mathcal{L}}_{\mathrm{n1}}\|_2.
\]
Applying Lemma~\ref{lem:l-dif} and Corollary~\ref{cor:st} yields the claim.
\end{proof}

\begin{lem}[Deviation bound for the projection matrices]
\label{lem:uut-dif}
Let $\mathbf U,\mathbf V\in\mathbb{R}^{n\times (K-1)}$ denote the empirical left and right singular vector matrices associated with the informative singular values of the unfolded operator, and let $\widetilde{\mathbf U},\widetilde{\mathbf V}$ denote their population counterparts. Then
\begin{align*}
\left\|\mathbf U\mathbf U^\top-\widetilde{\mathbf U}\widetilde{\mathbf U}^\top\right\|_2&=\mathcal{O}\!\left(\frac{1}{\rho^{1/2}n^{1/2}}\right)\quad\text{a.s.},\\
\left\|\mathbf V\mathbf V^\top-\widetilde{\mathbf V}\widetilde{\mathbf V}^\top\right\|_2&=\mathcal{O}\!\left(\frac{1}{\rho^{1/2}n^{1/2}}\right)\quad\text{a.s.}
\end{align*}
\end{lem}

\begin{proof}
The unfolded normalized Laplacian $\mathcal{L}$ and its population counterpart $\widetilde{\mathcal{L}}_{\mathrm{n1}}$ satisfy the deviation bound $\|\mathcal{L}_{\mathrm{n1}}-\widetilde{\mathcal{L}}_{\mathrm{n1}}\|_2=\mathcal{O}((\rho n)^{-1/2})$ almost surely by Lemma~\ref{lem:l-dif}. Moreover, Corollary~\ref{cor:st} ensures that the informative singular values are separated from the remainder of the spectrum by a constant gap. Applying Theorem~4 of \citet{yu2015useful} to the singular subspaces corresponding to the $(K-1)$ informative components yields

\begin{align*}
\left\|\mathbf U\mathbf U^\top-\widetilde{\mathbf U}\widetilde{\mathbf U}^\top\right\|_2
\le
\frac{2\sqrt{K-1}\bigl(2\sigma_n+\|\mathcal{L}_{\mathrm{n1}}-\widetilde{\mathcal{L}}_{\mathrm{n1}}\|_2\bigr)\|\mathcal{L}_{\mathrm{n1}}-\widetilde{\mathcal{L}}_{\mathrm{n1}}\|_2}
{\min\!\left(\sigma_{K+1}^2-\sigma_K^2,\ \sigma_2^2-\sigma_1^2\right)}.
\end{align*}
The denominator is bounded below by a positive constant due to spectral separation, while $\sigma_n=\mathcal{O}(1)$ and $\|\mathcal{L}_{\mathrm{n1}}-\widetilde{\mathcal{L}}_{\mathrm{n1}}\|_2=\mathcal{O}((\rho n)^{-1/2})$. Substituting these bounds gives
\[
\left\|\mathbf U\mathbf U^\top-\widetilde{\mathbf U}\widetilde{\mathbf U}^\top\right\|_2=\mathcal{O}\!\left(\frac{1}{\rho^{1/2}n^{1/2}}\right)\quad\text{a.s.}
\]
The same argument applies verbatim to $\left\|\mathbf V\mathbf V^\top-\widetilde{\mathbf V}\widetilde{\mathbf V}^\top\right\|_2$.
\end{proof}

\begin{lem}[Existence of a common orthogonal alignment]
\label{lem:w}
There exists an orthogonal matrix $\mathbf{W}\in\mathbb{O}\left((K-1)\times (K-1)\right)$ such that
\begin{align*}
\left\lVert \tilde{\mathbf{U}}^{\top}\mathbf{U} - \mathbf{W} \right\rVert_F
&= \mathcal{O}\left(\frac{1}{\rho^{1/2} n^{1/2}}\right)
\quad \text{a.s.},\\
\left\lVert \tilde{\mathbf{V}}^{\top}\mathbf{V} - \mathbf{W} \right\rVert_F
&= \mathcal{O}\left(\frac{1}{\rho^{1/2} n^{1/2}}\right)
\quad \text{a.s.}
\end{align*}
\end{lem}

\begin{proof}
Consider the matrix $\tilde{\mathbf{U}}^{\top}\mathbf{U}
+ \tilde{\mathbf{V}}^{\top}\mathbf{V}$ and let its singular value decomposition be
\[
\tilde{\mathbf{U}}^{\top}\mathbf{U}
+ \tilde{\mathbf{V}}^{\top}\mathbf{V}
= \mathbf{W}_1 \mathbf{\Sigma}' \mathbf{W}_2^{\top},
\]
where $\mathbf{W}_1, \mathbf{W}_2 \in
\mathbb{O}\left((K-1)\times (K-1)\right)$.
Define $\mathbf{W} = \mathbf{W}_1 \mathbf{W}_2^{\top}$, which belongs to
$\mathbb{O}\left((K-1)\times (K-1)\right)$.
This choice minimizes
\[
\left\lVert \tilde{\mathbf{U}}^{\top}\mathbf{U} - \mathbf{Q} \right\rVert_F
+
\left\lVert \tilde{\mathbf{V}}^{\top}\mathbf{V} - \mathbf{Q} \right\rVert_F
\]
over all $\mathbf{Q} \in \mathbb{O}\left((K-1)\times (K-1)\right)$.

Next, write the singular value decomposition of
$\tilde{\mathbf{U}}^{\top}\mathbf{U}$ as
\[
\tilde{\mathbf{U}}^{\top}\mathbf{U}
= \mathbf{W}_{\mathbf{U},1}
\mathbf{\Sigma}'_{\mathbf{U}}
\mathbf{W}_{\mathbf{U},2}^{\top},
\]
and define
\[
\mathbf{W}_{\mathbf{U}}
=
\mathbf{W}_{\mathbf{U},1}
\mathbf{W}_{\mathbf{U},2}^{\top}
\in
\mathbb{O}\left((K-1)\times (K-1)\right).
\]
Let the singular values be
$\sigma_k' = \cos(\theta_k)$.
Then
\begin{align*}
\left\lVert \tilde{\mathbf{U}}^{\top}\mathbf{U}
- \mathbf{W}_{\mathbf{U}} \right\rVert_F
&=
\left\lVert \mathbf{\Sigma}'_{\mathbf{U}} - \mathbf{I} \right\rVert_F \\
&=
\sqrt{
\sum_{k=1}^{K-1}
\left(1 - \sigma_k'\right)^2
} \le
\sum_{k=1}^{K-1}
\left(1 - \sigma_k'\right) \le
\sum_{k=1}^{K-1}
\left(1 - \sigma_k'^2\right) \\
&=
\sum_{k=1}^{K-1}
\sin^2 \theta_k \le
(K-1)
\left\lVert
\mathbf{U}\mathbf{U}^{\top}
-
\tilde{\mathbf{U}}\tilde{\mathbf{U}}^{\top}
\right\rVert_2^2.
\end{align*}

Similarly,
\begin{align*}
\left\lVert
\tilde{\mathbf{V}}^{\top}\mathbf{V}
- \mathbf{W}_{\mathbf{U}}
\right\rVert_F
&\le
\left\lVert
\tilde{\mathbf{V}}^{\top}\mathbf{V}
-
\tilde{\mathbf{U}}^{\top}\mathbf{U}
\right\rVert_F
+
\left\lVert
\tilde{\mathbf{U}}^{\top}\mathbf{U}
-
\mathbf{W}_{\mathbf{U}}
\right\rVert_F \\
&\le
(K-1)
\left\lVert
\tilde{\mathbf{V}}^{\top}\mathbf{V}
-
\tilde{\mathbf{U}}^{\top}\mathbf{U}
\right\rVert_2
+
(K-1)
\left\lVert
\mathbf{U}\mathbf{U}^{\top}
-
\tilde{\mathbf{U}}\tilde{\mathbf{U}}^{\top}
\right\rVert_2^2.
\end{align*}

Therefore,
\begin{align*}
\left\lVert
\tilde{\mathbf{U}}^{\top}\mathbf{U}
-
\mathbf{W}
\right\rVert_F
+
\left\lVert
\tilde{\mathbf{V}}^{\top}\mathbf{V}
-
\mathbf{W}
\right\rVert_F
\le
(K-1)
\left\lVert
\tilde{\mathbf{V}}^{\top}\mathbf{V}
-
\tilde{\mathbf{U}}^{\top}\mathbf{U}
\right\rVert_2
+
2(K-1)
\left\lVert
\mathbf{U}\mathbf{U}^{\top}
-
\tilde{\mathbf{U}}\tilde{\mathbf{U}}^{\top}
\right\rVert_2^2.
\end{align*}

By Lemma~\ref{lem:uut-dif}, both terms on the right-hand side are of order
$(\rho n)^{-1/2}$ almost surely, which completes the proof.
\end{proof}

\begin{cor}\label{cor:1}
\[
\|\tilde{\mathbf{U}}^{\top}\mathbf{U}\boldsymbol{\Sigma}
-\tilde{\boldsymbol{\Sigma}}\tilde{\mathbf{V}}^{\top}\mathbf{V}\|_F
=\mathcal{O}\!\left(\frac{1}{\rho^{1/2}n^{1/2}}\right),
\qquad
\|\tilde{\mathbf{V}}^{\top}\mathbf{V}\boldsymbol{\Sigma}
-\tilde{\boldsymbol{\Sigma}}\tilde{\mathbf{U}}^{\top}\mathbf{U}\|_F
=\mathcal{O}\!\left(\frac{1}{\rho^{1/2}n^{1/2}}\right)
\quad\text{a.s.}
\]
Moreover,
\[
\|\tilde{\mathbf{U}}^{\top}\mathbf{U}
-\tilde{\mathbf{V}}^{\top}\mathbf{V}\|_F
=\mathcal{O}\!\left(\frac{1}{\rho^{1/2}n^{1/2}}\right)
\quad\text{a.s.}
\]
\end{cor}
\begin{proof}
By Lemma~\ref{lem:l-dif}, we have
\begin{align*}
\|\tilde{\mathbf{U}}^{\top}\mathbf{U}\boldsymbol{\Sigma}
-\tilde{\boldsymbol{\Sigma}}\tilde{\mathbf{V}}^{\top}\mathbf{V}\|_F
&=
\|\tilde{\mathbf{U}}^{\top}
(\mathcal{L}_{\mathrm{n1}}-\tilde{\mathcal{L}}_{\mathrm{n1}})
\mathbf{V}\|_F \\
&\le \sqrt{K-1}\,
\|\tilde{\mathbf{U}}^{\top}
(\mathcal{L}_{\mathrm{n1}}-\tilde{\mathcal{L}}_{\mathrm{n1}})
\mathbf{V}\|_2 \\
&\le \sqrt{K-1}\,
\|\mathcal{L}_{\mathrm{n1}}-\tilde{\mathcal{L}}_{\mathrm{n1}}\|_2 \\
&=\mathcal{O}\!\left(\frac{1}{\rho^{1/2}n^{1/2}}\right)
\quad\text{a.s.}
\end{align*}
The same argument applies to
\(
\|\tilde{\boldsymbol{\Sigma}}\tilde{\mathbf{U}}^{\top}\mathbf{U}
-\tilde{\mathbf{V}}^{\top}\mathbf{V}\boldsymbol{\Sigma}\|_F
\).

Next, observe that

\begin{align*}
\tilde{\mathbf{U}}^{\top}\mathbf{U}
-\tilde{\mathbf{V}}^{\top}\mathbf{V}
&=
\Bigl[
(\tilde{\mathbf{U}}^{\top}\mathbf{U}\boldsymbol{\Sigma}
-\tilde{\boldsymbol{\Sigma}}\tilde{\mathbf{V}}^{\top}\mathbf{V})
+
(\tilde{\boldsymbol{\Sigma}}\tilde{\mathbf{U}}^{\top}\mathbf{U}
-\tilde{\mathbf{V}}^{\top}\mathbf{V}\boldsymbol{\Sigma})
\Bigr]\boldsymbol{\Sigma}^{-1}
-
\tilde{\boldsymbol{\Sigma}}
(\tilde{\mathbf{U}}^{\top}\mathbf{U}
-\tilde{\mathbf{V}}^{\top}\mathbf{V})
\boldsymbol{\Sigma}^{-1}.
\end{align*}

Therefore, for each pair of indices $(i,j)$,
\begin{align*}
\left|
(\tilde{\mathbf{U}}^{\top}\mathbf{U}
-\tilde{\mathbf{V}}^{\top}\mathbf{V})_{ij}
\right|
&\le
\left|
(\tilde{\mathbf{U}}^{\top}\mathbf{U}
-\tilde{\mathbf{V}}^{\top}\mathbf{V})_{ij}
\right|
\left(1+\frac{\tilde{\sigma}_i}{\sigma_j}\right) \\
&\le
\Bigl(
\|\tilde{\mathbf{U}}^{\top}\mathbf{U}\boldsymbol{\Sigma}
-\tilde{\boldsymbol{\Sigma}}\tilde{\mathbf{V}}^{\top}\mathbf{V}\|_F
+
\|\tilde{\boldsymbol{\Sigma}}\tilde{\mathbf{U}}^{\top}\mathbf{U}
-\tilde{\mathbf{V}}^{\top}\mathbf{V}\boldsymbol{\Sigma}\|_F
\Bigr)
\|\boldsymbol{\Sigma}^{-1}\|_2.
\end{align*}

Taking the Frobenius norm on both sides yields

\begin{align*}
\|\tilde{\mathbf{U}}^{\top}\mathbf{U}
-\tilde{\mathbf{V}}^{\top}\mathbf{V}\|_F
&\le
\Bigl(
\|\tilde{\mathbf{U}}^{\top}\mathbf{U}\boldsymbol{\Sigma}
-\tilde{\boldsymbol{\Sigma}}\tilde{\mathbf{V}}^{\top}\mathbf{V}\|_F
+
\|\tilde{\boldsymbol{\Sigma}}\tilde{\mathbf{U}}^{\top}\mathbf{U}
-\tilde{\mathbf{V}}^{\top}\mathbf{V}\boldsymbol{\Sigma}\|_F
\Bigr)
\|\boldsymbol{\Sigma}^{-1}\|_2.
\end{align*}
From the first part of the proof and Corollary~\ref{cor:s}, we have
\[
\|\boldsymbol{\Sigma}^{-1}\|_2=\mathcal{O}(1)
\quad\text{a.s.},
\]
and therefore
\[
\|\tilde{\mathbf{U}}^{\top}\mathbf{U}
-\tilde{\mathbf{V}}^{\top}\mathbf{V}\|_F
=
\mathcal{O}\!\left(\frac{1}{\rho^{1/2}n^{1/2}}\right)
\quad\text{a.s.}
\]
This completes the proof.
\end{proof}

\begin{cor}\label{cor:2}
\[
\|\mathbf{U}-\tilde{\mathbf{U}}\tilde{\mathbf{U}}^{\top}\mathbf{U}\|_2,\qquad
\|\mathbf{V}-\tilde{\mathbf{V}}\tilde{\mathbf{V}}^{\top}\mathbf{V}\|_2
=
\mathcal{O}\!\left(\frac{1}{\rho^{1/2}n^{1/2}}\right)
\quad\text{a.s.}
\]
\end{cor}

\begin{proof}
By Lemma~\ref{lem:uut-dif},
\begin{align*}
\|\mathbf{U}-\tilde{\mathbf{U}}\tilde{\mathbf{U}}^{\top}\mathbf{U}\|_2
&=\|(\mathbf{U}\mathbf{U}^{\top}-\tilde{\mathbf{U}}\tilde{\mathbf{U}}^{\top})\mathbf{U}\|_2 \\
&\le \|\mathbf{U}\mathbf{U}^{\top}-\tilde{\mathbf{U}}\tilde{\mathbf{U}}^{\top}\|_2 \,\|\mathbf{U}\|_2 \\
&\le \|\mathbf{U}\mathbf{U}^{\top}-\tilde{\mathbf{U}}\tilde{\mathbf{U}}^{\top}\|_2 \\
&\le \|\mathbf{U}\mathbf{U}^{\top}-\tilde{\mathbf{U}}\tilde{\mathbf{U}}^{\top}\|_F \\
&=
\mathcal{O}\!\left(\frac{1}{\rho^{1/2}n^{1/2}}\right)\quad\text{a.s.}
\end{align*}
where we used \(\|\mathbf{U}\|_2=1\). The same argument applies to \(\mathbf{V}\), since \(\|\mathbf{V}\|_2=1\).
\end{proof}

\begin{cor}\label{cor:3}
\[
\|\mathbf{U}-\tilde{\mathbf{U}}\mathbf{W}\|_2,\qquad
\|\mathbf{V}-\tilde{\mathbf{V}}\mathbf{W}\|_2
=
\mathcal{O}\!\left(\frac{1}{\rho^{1/2}n^{1/2}}\right)
\quad\text{a.s.}
\]
\end{cor}

\begin{proof}
Write
\[
\mathbf{U}-\tilde{\mathbf{U}}\mathbf{W}
=
\bigl(\mathbf{U}-\tilde{\mathbf{U}}\tilde{\mathbf{U}}^\top\mathbf{U}\bigr)
+
\tilde{\mathbf{U}}\bigl(\tilde{\mathbf{U}}^\top\mathbf{U}-\mathbf{W}\bigr).
\]
Taking spectral norms and applying the triangle inequality gives
\[
\|\mathbf{U}-\tilde{\mathbf{U}}\mathbf{W}\|_2
\le
\|\mathbf{U}-\tilde{\mathbf{U}}\tilde{\mathbf{U}}^\top\mathbf{U}\|_2
+
\|\tilde{\mathbf{U}}\|_2\,\|\tilde{\mathbf{U}}^\top\mathbf{U}-\mathbf{W}\|_2.
\]
Since $\tilde{\mathbf{U}}$ has orthonormal columns, $\|\tilde{\mathbf{U}}\|_2=1$, and
$\|\tilde{\mathbf{U}}^\top\mathbf{U}-\mathbf{W}\|_2\le
\|\tilde{\mathbf{U}}^\top\mathbf{U}-\mathbf{W}\|_F$.
Therefore,
\[
\|\mathbf{U}-\tilde{\mathbf{U}}\mathbf{W}\|_2
\le
\|\mathbf{U}-\tilde{\mathbf{U}}\tilde{\mathbf{U}}^\top\mathbf{U}\|_2
+
\|\tilde{\mathbf{U}}^\top\mathbf{U}-\mathbf{W}\|_F.
\]
The first term is $\mathcal{O}((\rho n)^{-1/2})$ a.s.\ by Corollary~\ref{cor:2}, and the second
term is $\mathcal{O}((\rho n)^{-1/2})$ a.s.\ by Lemma~\ref{lem:w}, proving the claim.
The same argument applies to $\mathbf{V}$.
\end{proof}

\begin{cor}\label{cor:4}
Almost surely, the following bounds hold:
\begin{enumerate}
\item $\|\mathbf{W}\boldsymbol{\Sigma}-\tilde{\boldsymbol{\Sigma}}\mathbf{W}\|_F
      =\mathcal{O}((\rho n)^{-1/2})$,
\item $\|\mathbf{W}\boldsymbol{\Sigma}^{1/2}-\tilde{\boldsymbol{\Sigma}}^{1/2}\mathbf{W}\|_F
      =\mathcal{O}((\rho n)^{-1/2})$,
\item $\|\mathbf{W}\boldsymbol{\Sigma}^{-1/2}-\tilde{\boldsymbol{\Sigma}}^{-1/2}\mathbf{W}\|_F
      =\mathcal{O}((\rho n)^{-1/2})$.
\end{enumerate}
\end{cor}

\begin{proof}
\begin{enumerate}
\item
Decompose
\begin{align*}
\mathbf{W}\boldsymbol{\Sigma}-\tilde{\boldsymbol{\Sigma}}\mathbf{W}
&=
(\mathbf{W}-\tilde{\mathbf{U}}^{\top}\mathbf{U})\boldsymbol{\Sigma}
+(\tilde{\mathbf{U}}^{\top}\mathbf{U}\boldsymbol{\Sigma}-\tilde{\boldsymbol{\Sigma}}\tilde{\mathbf{V}}^{\top}\mathbf{V})
+\tilde{\boldsymbol{\Sigma}}(\tilde{\mathbf{V}}^{\top}\mathbf{V}-\mathbf{W}).
\end{align*}
By the triangle inequality and $\|AB\|_F\le \|A\|_F\|B\|_2$,
\begin{align*}
\|\mathbf{W}\boldsymbol{\Sigma}-\tilde{\boldsymbol{\Sigma}}\mathbf{W}\|_F
&\le
\|\mathbf{W}-\tilde{\mathbf{U}}^{\top}\mathbf{U}\|_F\,\|\boldsymbol{\Sigma}\|_2
+\|\tilde{\mathbf{U}}^{\top}\mathbf{U}\boldsymbol{\Sigma}-\tilde{\boldsymbol{\Sigma}}\tilde{\mathbf{V}}^{\top}\mathbf{V}\|_F \\
&\quad+
\|\tilde{\boldsymbol{\Sigma}}\|_2\,\|\tilde{\mathbf{V}}^{\top}\mathbf{V}-\mathbf{W}\|_F.
\end{align*}
Using $\|\boldsymbol{\Sigma}\|_2=\sigma_n=\mathcal{O}(1)$ and
$\|\tilde{\boldsymbol{\Sigma}}\|_2=\tilde{\sigma}_n=\mathcal{O}(1)$
(Corollaries~\ref{cor:st} and~\ref{cor:s}), together with
Lemma~\ref{lem:w} and Corollary~\ref{cor:1}, yields
\[
\|\mathbf{W}\boldsymbol{\Sigma}-\tilde{\boldsymbol{\Sigma}}\mathbf{W}\|_F
=\mathcal{O}((\rho n)^{-1/2})\quad\text{a.s.}
\]

\item
Let $\sigma_{\min}:=\sigma_2$ and $\tilde\sigma_{\min}:=\tilde\sigma_2$.
By Corollary~\ref{cor:s}, $\sigma_{\min}=\Omega(1)$ a.s., and by
Corollary~\ref{cor:st}, $\tilde\sigma_{\min}=\Omega(1)$ a.s.
For each $i,j$,
\begin{align*}
\bigl|(\mathbf{W}\boldsymbol{\Sigma}^{1/2}-\tilde{\boldsymbol{\Sigma}}^{1/2}\mathbf{W})_{ij}\bigr|
&=
|W_{ij}|\;|\sigma_j^{1/2}-\tilde\sigma_i^{1/2}|\\
&=
|W_{ij}|\;\frac{|\sigma_j-\tilde\sigma_i|}{\sigma_j^{1/2}+\tilde\sigma_i^{1/2}}\\
&\le
\frac{|(\mathbf{W}\boldsymbol{\Sigma}-\tilde{\boldsymbol{\Sigma}}\mathbf{W})_{ij}|}
{\sigma_{\min}^{1/2}+\tilde\sigma_{\min}^{1/2}}.
\end{align*}
Squaring and summing over $i,j$ gives
\[
\|\mathbf{W}\boldsymbol{\Sigma}^{1/2}-\tilde{\boldsymbol{\Sigma}}^{1/2}\mathbf{W}\|_F
\le
\frac{\|\mathbf{W}\boldsymbol{\Sigma}-\tilde{\boldsymbol{\Sigma}}\mathbf{W}\|_F}
{\sigma_{\min}^{1/2}+\tilde\sigma_{\min}^{1/2}}
=
\mathcal{O}((\rho n)^{-1/2})\quad\text{a.s.},
\]
using part (1) and $\sigma_{\min}^{1/2}+\tilde\sigma_{\min}^{1/2}=\Omega(1)$ a.s.

\item
Similarly, for each $i,j$,
\begin{align*}
\bigl|(\mathbf{W}\boldsymbol{\Sigma}^{-1/2}-\tilde{\boldsymbol{\Sigma}}^{-1/2}\mathbf{W})_{ij}\bigr|
&=
|W_{ij}|\;|\sigma_j^{-1/2}-\tilde\sigma_i^{-1/2}|\\
&=
|W_{ij}|\;\frac{|\sigma_j^{1/2}-\tilde\sigma_i^{1/2}|}{\sigma_j^{1/2}\tilde\sigma_i^{1/2}}\\
&\le
\frac{|(\mathbf{W}\boldsymbol{\Sigma}^{1/2}-\tilde{\boldsymbol{\Sigma}}^{1/2}\mathbf{W})_{ij}|}
{\sigma_{\min}^{1/2}\tilde\sigma_{\min}^{1/2}}.
\end{align*}
Therefore,
\[
\|\mathbf{W}\boldsymbol{\Sigma}^{-1/2}-\tilde{\boldsymbol{\Sigma}}^{-1/2}\mathbf{W}\|_F
\le
\frac{\|\mathbf{W}\boldsymbol{\Sigma}^{1/2}-\tilde{\boldsymbol{\Sigma}}^{1/2}\mathbf{W}\|_F}
{\sigma_{\min}^{1/2}\tilde\sigma_{\min}^{1/2}}
=
\mathcal{O}((\rho n)^{-1/2})\quad\text{a.s.},
\]
since $\sigma_{\min}^{1/2}\tilde\sigma_{\min}^{1/2}=\Omega(1)$ a.s. and part (2) holds.
\end{enumerate}
\end{proof}

Finally, we complete the proof of Theorem~2.
\begin{proof}
By Corollary~\ref{cor:s}, Corollary~\ref{cor:3}, and Corollary~\ref{cor:4}, we bound
\begin{align*}
\|\hat{\mathbf{Y}}^{(t)}-\tilde{\mathbf{Y}}^{(t)}\mathbf{W}\|_{2}
&=
\bigl\|
  (\mathbf{V}^{(t)}\boldsymbol{\Sigma}^{1/2}-\mathbf{U}\boldsymbol{\Sigma}^{-1/2})
  -(\tilde{\mathbf{V}}^{(t)}\tilde{\boldsymbol{\Sigma}}^{1/2}-\tilde{\mathbf{U}}\tilde{\boldsymbol{\Sigma}}^{-1/2})\mathbf{W}
\bigr\|_{2} \\
&\le
\|\mathbf{V}^{(t)}\boldsymbol{\Sigma}^{1/2}-\tilde{\mathbf{V}}^{(t)}\tilde{\boldsymbol{\Sigma}}^{1/2}\mathbf{W}\|_{2}
+
\|\mathbf{U}\boldsymbol{\Sigma}^{-1/2}-\tilde{\mathbf{U}}\tilde{\boldsymbol{\Sigma}}^{-1/2}\mathbf{W}\|_{2}.
\end{align*}
We control these two terms separately.

\smallskip
\noindent\textbf{Step 1: Bounding the $\mathbf{V}^{(t)}$ term.}
Using the fact that $\mathbf{V}^{(t)}$ is a block of $\mathbf{V}$,
\begin{align*}
\|\mathbf{V}^{(t)}\boldsymbol{\Sigma}^{1/2}-\tilde{\mathbf{V}}^{(t)}\tilde{\boldsymbol{\Sigma}}^{1/2}\mathbf{W}\|_{2}
&\le
\|\mathbf{V}\boldsymbol{\Sigma}^{1/2}-\tilde{\mathbf{V}}\tilde{\boldsymbol{\Sigma}}^{1/2}\mathbf{W}\|_{2} \\
&=
\|(\mathbf{V}-\tilde{\mathbf{V}}\mathbf{W})\boldsymbol{\Sigma}^{1/2}
  +\tilde{\mathbf{V}}(\mathbf{W}\boldsymbol{\Sigma}^{1/2}-\tilde{\boldsymbol{\Sigma}}^{1/2}\mathbf{W})\|_{2} \\
&\le
\|\mathbf{V}-\tilde{\mathbf{V}}\mathbf{W}\|_2\,\|\boldsymbol{\Sigma}^{1/2}\|_2
+\|\tilde{\mathbf{V}}\|_2\,\|\mathbf{W}\boldsymbol{\Sigma}^{1/2}-\tilde{\boldsymbol{\Sigma}}^{1/2}\mathbf{W}\|_2 \\
&\le
\sigma_n^{1/2}\|\mathbf{V}-\tilde{\mathbf{V}}\mathbf{W}\|_2
+\|\mathbf{W}\boldsymbol{\Sigma}^{1/2}-\tilde{\boldsymbol{\Sigma}}^{1/2}\mathbf{W}\|_F \\
&=
\mathcal{O}\!\left((\rho n)^{-1/2}\right)\quad\text{a.s.},
\end{align*}
where we used $\|\tilde{\mathbf{V}}\|_2=1$ and $\|\cdot\|_2\le \|\cdot\|_F$.

\smallskip
\noindent\textbf{Step 2: Bounding the $\mathbf{U}$ term.}
Similarly,
\begin{align*}
\|\mathbf{U}\boldsymbol{\Sigma}^{-1/2}-\tilde{\mathbf{U}}\tilde{\boldsymbol{\Sigma}}^{-1/2}\mathbf{W}\|_{2} 
&=
\|(\mathbf{U}-\tilde{\mathbf{U}}\mathbf{W})\boldsymbol{\Sigma}^{-1/2}
  +\tilde{\mathbf{U}}(\mathbf{W}\boldsymbol{\Sigma}^{-1/2}-\tilde{\boldsymbol{\Sigma}}^{-1/2}\mathbf{W})\|_{2} \\
&\le
\|\mathbf{U}-\tilde{\mathbf{U}}\mathbf{W}\|_2\,\|\boldsymbol{\Sigma}^{-1/2}\|_2
+\|\tilde{\mathbf{U}}\|_2\,\|\mathbf{W}\boldsymbol{\Sigma}^{-1/2}-\tilde{\boldsymbol{\Sigma}}^{-1/2}\mathbf{W}\|_2 \\
&\le
\sigma_n^{-1/2}\|\mathbf{U}-\tilde{\mathbf{U}}\mathbf{W}\|_2
+\|\mathbf{W}\boldsymbol{\Sigma}^{-1/2}-\tilde{\boldsymbol{\Sigma}}^{-1/2}\mathbf{W}\|_F \\
&=
\mathcal{O}\!\left((\rho n)^{-1/2}\right)\quad\text{a.s.},
\end{align*}
where we used $\|\tilde{\mathbf{U}}\|_2=1$, $\|\cdot\|_2\le \|\cdot\|_F$, and Corollary~\ref{cor:s} to ensure
$\|\boldsymbol{\Sigma}^{-1/2}\|_2=\sigma_2^{-1/2}=\mathcal{O}(1)$ almost surely.

\smallskip
\noindent
Combining the two bounds gives
\[
\|\hat{\mathbf{Y}}^{(t)}-\tilde{\mathbf{Y}}^{(t)}\mathbf{W}\|_{2}
=
\mathcal{O}\!\left((\rho n)^{-1/2}\right)\quad\text{a.s.}
\]
\end{proof}

\section{Proof of Theorem~\ref{thm:stability-n1-main}}
\begin{proof}
Using the identity $\tilde{\mathcal{L}}_{\mathrm{n1}}^{\top}\tilde{\mathbf{U}}
=\tilde{\mathbf{V}}\tilde{\boldsymbol{\Sigma}}$, the noise-free dynamic embedding admits the representation
\begin{align*}
\tilde{\mathbf{Y}}^{(t)}
&=\tilde{\mathbf{V}}^{(t)}\tilde{\boldsymbol{\Sigma}}^{1/2}
  -\tilde{\mathbf{U}}\tilde{\boldsymbol{\Sigma}}^{-1/2} \\
&=\tilde{\mathbf{L}}_{\mathrm{n1}}^{(t)}\tilde{\mathbf{U}}\tilde{\boldsymbol{\Sigma}}^{-1/2}
  -\tilde{\mathbf{U}}\tilde{\boldsymbol{\Sigma}}^{-1/2} \\
&=(\mathbf{I}-\tilde{\mathbf{D}}^{(t)-1/2}\mathbf{P}^{(t)}
   \tilde{\mathbf{D}}^{(t)-1/2})\tilde{\mathbf{U}}\tilde{\boldsymbol{\Sigma}}^{-1/2}
  -\tilde{\mathbf{U}}\tilde{\boldsymbol{\Sigma}}^{-1/2} \\
&=-\tilde{\mathbf{D}}^{(t)-1/2}\mathbf{P}^{(t)}
   \tilde{\mathbf{D}}^{(t)-1/2}\tilde{\mathbf{U}}\tilde{\boldsymbol{\Sigma}}^{-1/2}.
\end{align*}
Consequently, for each $i\in\{1,\ldots,n\}$,
\begin{align*}
\tilde{\mathbf{Y}}_{i:}^{(t)}
&=-\tilde{\mathbf{d}}_i^{(t)-1/2}\mathbf{P}_{i:}^{(t)}
  \tilde{\mathbf{D}}^{(t)-1/2}\tilde{\mathbf{U}}\tilde{\boldsymbol{\Sigma}}^{-1/2} \\
&=-(\mathbf{P}_{i:}^{(t)\top}\mathbf{1})^{-1/2}\,
   \mathbf{P}_{i:}^{(t)}
   \tilde{\mathbf{D}}^{(t)-1/2}\tilde{\mathbf{U}}\tilde{\boldsymbol{\Sigma}}^{-1/2}.
\end{align*}
This expression depends on node $i$ only through its population connectivity
profile $\mathbf{P}_{i:}^{(t)}$ and the population degree matrix
$\tilde{\mathbf{D}}^{(t)}$. It follows immediately that
\[
\mathbf{P}_{i:}^{(t)}=\mathbf{P}_{j:}^{(t)}
\;\Rightarrow\;
\tilde{\mathbf{Y}}_{i:}^{(t)}=\tilde{\mathbf{Y}}_{j:}^{(t)},
\qquad
\mathbf{P}_{i:}^{(t)}=\mathbf{P}_{i:}^{(s)},\ 
\tilde{\mathbf{D}}^{(t)}=\tilde{\mathbf{D}}^{(s)}
\;\Rightarrow\;
\tilde{\mathbf{Y}}_{i:}^{(t)}=\tilde{\mathbf{Y}}_{i:}^{(s)}.
\]
Therefore, the noise-free dynamic embedding satisfies both cross-sectional and
longitudinal stability.
\end{proof}

When nodes with identical population connectivity profiles tend to connect more
strongly to each other at a given time step (i.e.,
$\mathbf{P}_{i:}^{(t)}=\mathbf{P}_{j:}^{(t)}$ implies large
$\mathbf{P}_{ij}^{(t)}$), mild violations of the equal-degree condition have
limited impact. In this regime, the discrepancy between
$\hat{\mathbf{Y}}_{i:}^{(t)}$ and $\hat{\mathbf{Y}}_{i:}^{(s)}$ is governed by the
difference between $\mathbf{P}_{i:}^{(t)}\mathbf{D}^{(t)-1/2}$ and
$\mathbf{P}_{i:}^{(s)}\mathbf{D}^{(s)-1/2}$. The $j$th coordinate of this
difference vanishes when $\mathbf{P}_{i:}^{(t)}=\mathbf{P}_{j:}^{(t)}$ and remains
small otherwise, since $\mathbf{P}_{ij}^{(t)}$ is small in that case.

\section{Proof of Proposition~\ref{prop:dynamic-cheeger-main}}
\begin{proof}
Each matrix \(\mathbf{L}_{\mathrm{n1}}^{(t)}\mathbf{L}_{\mathrm{n1}}^{(t)\top}\) is symmetric and positive semidefinite. By Weyl’s inequality,
\[
\sigma_k
=\sqrt{\lambda_k\!\left(\sum_{t=1}^{T}\mathbf{L}_{\mathrm{n1}}^{(t)}\mathbf{L}_{\mathrm{n1}}^{(t)\top}\right)}
\ge
\sqrt{\lambda_k\!\left(\mathbf{L}_{\mathrm{n1}}^{(t)}\mathbf{L}_{\mathrm{n1}}^{(t)\top}\right)
+\lambda_1\!\left(\sum_{s\neq t}\mathbf{L}_{\mathrm{n1}}^{(s)}\mathbf{L}_{\mathrm{n1}}^{(s)\top}\right)}.
\]
Since \(\sum_{s\neq t}\mathbf{L}_{\mathrm{n1}}^{(s)}\mathbf{L}_{\mathrm{n1}}^{(s)\top}\) is positive semidefinite, its smallest eigenvalue is nonnegative. Moreover, each normalized Laplacian \(\mathbf{L}_{\mathrm{n1}}^{(t)}\) is symmetric and positive semidefinite, so
\[
\sqrt{\lambda_k\!\left(\mathbf{L}_{\mathrm{n1}}^{(t)}\mathbf{L}_{\mathrm{n1}}^{(t)\top}\right)}
=\lambda_k\!\left(\mathbf{L}_{\mathrm{n1}}^{(t)}\right).
\]
Thus, for all \(t\in\{1,\ldots,T\}\),
\[
\sigma_k \ge \lambda_k\!\left(\mathbf{L}_{\mathrm{n1}}^{(t)}\right).
\]
Applying the higher order Cheeger inequality yields
\[
\phi_k(G^{(t)}) \le \mathrm{poly}(k)\sqrt{\lambda_k\!\left(\mathbf{L}_{\mathrm{n1}}^{(t)}\right)}
\le \mathrm{poly}(k)\sqrt{\sigma_k},
\]
and therefore
\[
\phi_k(\mathcal{G})
=\max_{t\in\{1,\ldots,T\}}\phi_k(G^{(t)})
\le \mathrm{poly}(k)\sqrt{\sigma_k}.
\]

For the reverse inequality, Weyl’s inequality gives
\[
\sigma_k
\le
\sqrt{\lambda_k\!\left(\mathbf{L}_{\mathrm{n1}}^{(t)}\mathbf{L}_{\mathrm{n1}}^{(t)\top}\right)
+\lambda_n\!\left(\sum_{s\neq t}\mathbf{L}_{\mathrm{n1}}^{(s)}\mathbf{L}_{\mathrm{n1}}^{(s)\top}\right)}.
\]
Using
\[
\lambda_n\!\left(\sum_{s\neq t}\mathbf{L}_{\mathrm{n1}}^{(s)}\mathbf{L}_{\mathrm{n1}}^{(s)\top}\right)
=\|\mathcal{L}_{\mathrm{n1}}^{-t}\|_2^2,
\qquad
\lambda_k\!\left(\mathbf{L}_{\mathrm{n1}}^{(t)}\mathbf{L}_{\mathrm{n1}}^{(t)\top}\right)
=\lambda_k\!\left(\mathbf{L}_{\mathrm{n1}}^{(t)}\right)^2,
\]
we obtain
\[
\sigma_k
\le
\sqrt{\lambda_k\!\left(\mathbf{L}_{\mathrm{n1}}^{(t)}\right)^2+\|\mathcal{L}_{\mathrm{n1}}^{-t}\|_2^2}.
\]
Applying the higher order Cheeger inequality again gives
\[
\lambda_k\!\left(\mathbf{L}_{\mathrm{n1}}^{(t)}\right)\le 2\phi_k(G^{(t)}),
\]
and hence
\[
\sigma_k
\le
\sqrt{(2\phi_k(G^{(t)}))^2+\|\mathcal{L}_{\mathrm{n1}}^{-t}\|_2^2}.
\]
Rearranging yields
\[
\sigma_k^2-\|\mathcal{L}_{\mathrm{n1}}^{-t}\|_2^2
\le (2\phi_k(G^{(t)}))^2.
\]
Taking the minimum over \(t\) on the left-hand side and the maximum on the right-hand side gives
\[
\sigma_k^2-\min_{t\in\{1,\ldots,T\}}\|\mathcal{L}_{\mathrm{n1}}^{-t}\|_2^2
\le \left(2\max_{t\in\{1,\ldots,T\}}\phi_k(G^{(t)})\right)^2
=(2\phi_k(\mathcal{G}))^2,
\]
which completes the proof.
\end{proof}

\section{Stability Properties of the Anchor Embedding}

Under the same assumptions as in Theorem~1, there exists a matrix \(\mathbf{X}\in\mathbb{R}^{n\times d}\) such that
\[
\|\hat{\mathbf{X}}-\mathbf{X}\|_2
=\mathcal{O}\!\left(\frac{1}{\rho^{1/2}n^{1/2}}\right)\quad\text{a.s.}
\]
and the following stability property hold: if $\mathbf{z}_i=\mathbf{z}_j$, then $\mathbf{X}_{i:}=\mathbf{X}_{j:}$.

In particular, setting \(\mathbf{X}=\tilde{\mathbf{X}}\mathbf{W}\) satisfies the desired stability properties. The result follows by combining a convergence statement for the empirical anchor embedding with an exact noise-free stability property.

\begin{cor}
There exists \(\mathbf{W}\in\mathbb{O}((K-1)\times(K-1))\) such that
\[
\|\hat{\mathbf{X}}-\tilde{\mathbf{X}}\mathbf{W}\|_2
=\mathcal{O}\!\left(\frac{1}{\rho^{1/2}n^{1/2}}\right)\quad\text{a.s.}
\]
\end{cor}

\begin{proof}
By Corollary~\ref{cor:s}, Corollary~\ref{cor:3}, and Corollary~\ref{cor:4},
\begin{align*}
\|\hat{\mathbf{X}}-\tilde{\mathbf{X}}\mathbf{W}\|_2
&=\|\mathbf{U}\boldsymbol{\Sigma}^{1/2}-\tilde{\mathbf{U}}\tilde{\boldsymbol{\Sigma}}^{1/2}\mathbf{W}\|_2 \\
&\le \|(\mathbf{U}-\tilde{\mathbf{U}}\mathbf{W})\boldsymbol{\Sigma}^{1/2}
+\tilde{\mathbf{U}}(\mathbf{W}\boldsymbol{\Sigma}^{1/2}-\tilde{\boldsymbol{\Sigma}}^{1/2}\mathbf{W})\|_2 \\
&\le \|\mathbf{U}-\tilde{\mathbf{U}}\mathbf{W}\|_2\,\|\boldsymbol{\Sigma}^{1/2}\|_2
+\|\tilde{\mathbf{U}}\|_2\,\|\mathbf{W}\boldsymbol{\Sigma}^{1/2}-\tilde{\boldsymbol{\Sigma}}^{1/2}\mathbf{W}\|_2 \\
&\le \sigma_n^{1/2}\|\mathbf{U}-\tilde{\mathbf{U}}\mathbf{W}\|_2
+\|\mathbf{W}\boldsymbol{\Sigma}^{1/2}-\tilde{\boldsymbol{\Sigma}}^{1/2}\mathbf{W}\|_F \\
&=\mathcal{O}\!\left(\frac{1}{\rho^{1/2}n^{1/2}}\right)\quad\text{a.s.}
\end{align*}
\end{proof}

\begin{cor}\label{cor:anchor-noise-free}
The noise-free anchor embedding \(\tilde{\mathbf{X}}\) is constant within communities:
\[
\mathbf{z}_i=\mathbf{z}_j \;\Rightarrow\; \tilde{\mathbf{X}}_{i:}=\tilde{\mathbf{X}}_{j:}.
\]
\end{cor}

\begin{proof}
By Lemma~\ref{lem:svd} and Corollary~\ref{cor:st}, the left singular vectors \(\tilde{\mathbf{U}}\) corresponding to the \(K-1\) nontrivial components lie in the subspace \(\mathcal{S}\). Hence each column of \(\tilde{\mathbf{U}}\) is of the form \(\mathcal{Z}\mathbf{q}\) for some \(\mathbf{q}\in\mathbb{R}^K\), so \(\tilde{\mathbf{U}}\) is piecewise constant across the rows indexed by the same community label. In particular, if \(\mathbf{z}_i=\mathbf{z}_j\), then
\[
\tilde{\mathbf{U}}_{i:}=\tilde{\mathbf{U}}_{j:}.
\]
Since \(\tilde{\mathbf{X}}=\tilde{\mathbf{U}}\tilde{\boldsymbol{\Sigma}}^{1/2}\), multiplying by the common matrix \(\tilde{\boldsymbol{\Sigma}}^{1/2}\) preserves row equality, and therefore
\[
\tilde{\mathbf{X}}_{i:}
=\tilde{\mathbf{U}}_{i:}\tilde{\boldsymbol{\Sigma}}^{1/2}
=\tilde{\mathbf{U}}_{j:}\tilde{\boldsymbol{\Sigma}}^{1/2}
=\tilde{\mathbf{X}}_{j:}.
\]
\end{proof}

\section{Details of Synthetic Datasets}

We consider two synthetic dynamic network settings generated from a dynamic stochastic block model
observed over \(T=3\) time steps. The number of communities varies over time and is given by
\[
K^{(1)}=3, \qquad K^{(2)}=2, \qquad K^{(3)}=2.
\]

Although the number and composition of communities change over time, we assume that this evolution
can be explained by \(K=3\) distinct community trajectories, each describing how a node’s community
membership evolves over time.

The temporal evolution is governed by deterministic mappings \(\{c^{(t)}\}_{t=1}^3\), where
\(c^{(t)}(k)\) denotes the community index at time \(t\) associated with the \(k\)-th trajectory.
These mappings are given by
\[
\begin{aligned}
&c^{(1)}(1)=1,\quad c^{(1)}(2)=2,\quad c^{(1)}(3)=3,\\
&c^{(2)}(1)=1,\quad c^{(2)}(2)=1,\quad c^{(2)}(3)=2,\\
&c^{(3)}(1)=1,\quad c^{(3)}(2)=2,\quad c^{(3)}(3)=2.
\end{aligned}
\]

Equivalently, the correspondence between latent trajectories and observable communities at each time
point can be described as follows:
\begin{itemize}
\item At time \(t=1\), all trajectories correspond to distinct communities: trajectory \(k\) forms
community \(k\) for \(k=1,2,3\).
\item At time \(t=2\), trajectories \(1\) and \(2\) merge and jointly form community \(1\), while
trajectory \(3\) alone forms community \(2\).
\item At time \(t=3\), trajectory \(1\) alone forms community \(1\), while trajectories \(2\) and \(3\)
merge and jointly form community \(2\).
\end{itemize}

The two experimental settings differ only in the distribution \(\boldsymbol{\pi}\) over latent trajectories:
\[
\begin{cases}
\text{Experiment 1:} & \boldsymbol{\pi}=(0.3,\,0.4,\,0.3),\\[4pt]
\text{Experiment 2:} & \boldsymbol{\pi}=(0.4,\,0.5,\,0.1).
\end{cases}
\]

The second configuration introduces a substantial imbalance in community sizes, resulting in markedly increased degree heterogeneity across nodes. This setting is designed to assess the robustness of Laplacian based dynamic embeddings in comparison to adjacency based methods under heterogeneous degree regimes.


\section{Scenarios Where ULSE Outperforms UASE}

\begin{figure*}[htbp]
    \centering
    \begin{tabular}{cc}
        \includegraphics[width=0.4\linewidth]{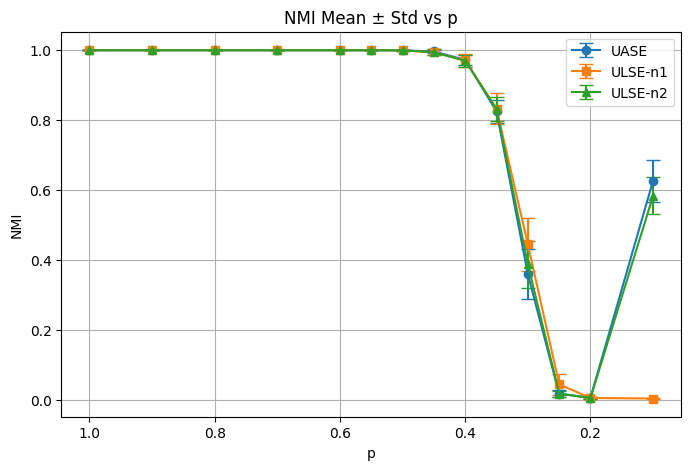} &
        \includegraphics[width=0.4\linewidth]{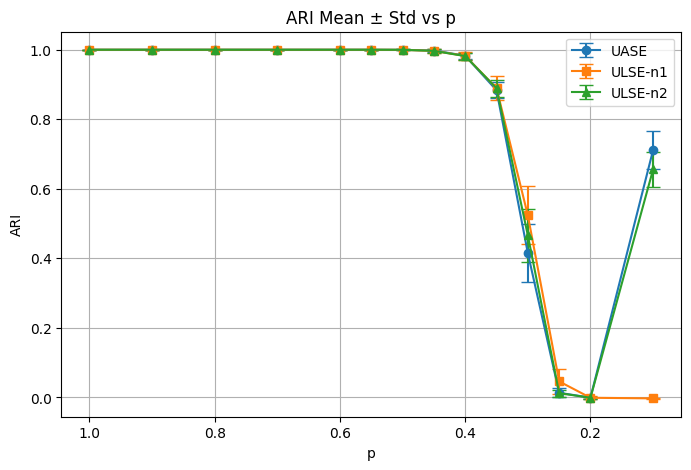} \\
        (a) Experiment 1 NMI & (b) Experiment 1 ARI \\
        \includegraphics[width=0.4\linewidth]{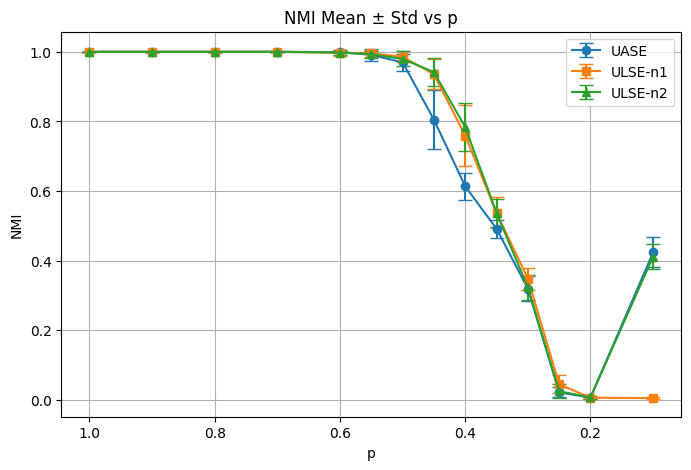} &
        \includegraphics[width=0.4\linewidth]{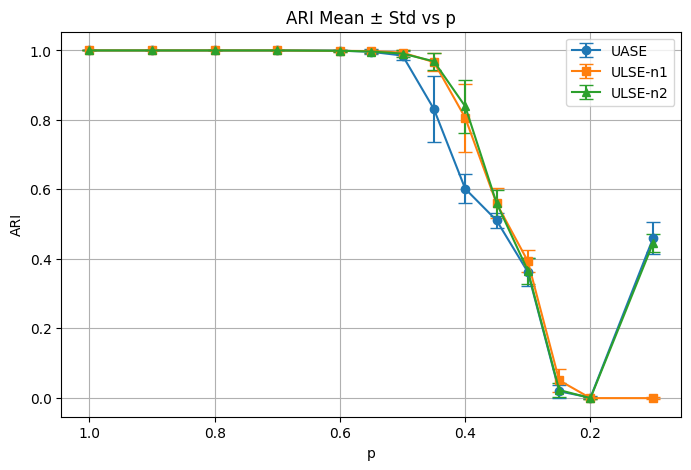} \\
        (c) Experiment 2 NMI & (d) Experiment 2 ARI \\
    \end{tabular}
    \caption{Clustering performance (NMI and ARI) for Experiments~1 and~2 as the intra-community probability \(p\) varies.}
    \label{fig:exp2_metrics}
\end{figure*}

\begin{figure*}[htbp]
  \centering
  \includegraphics[width=0.8\linewidth]{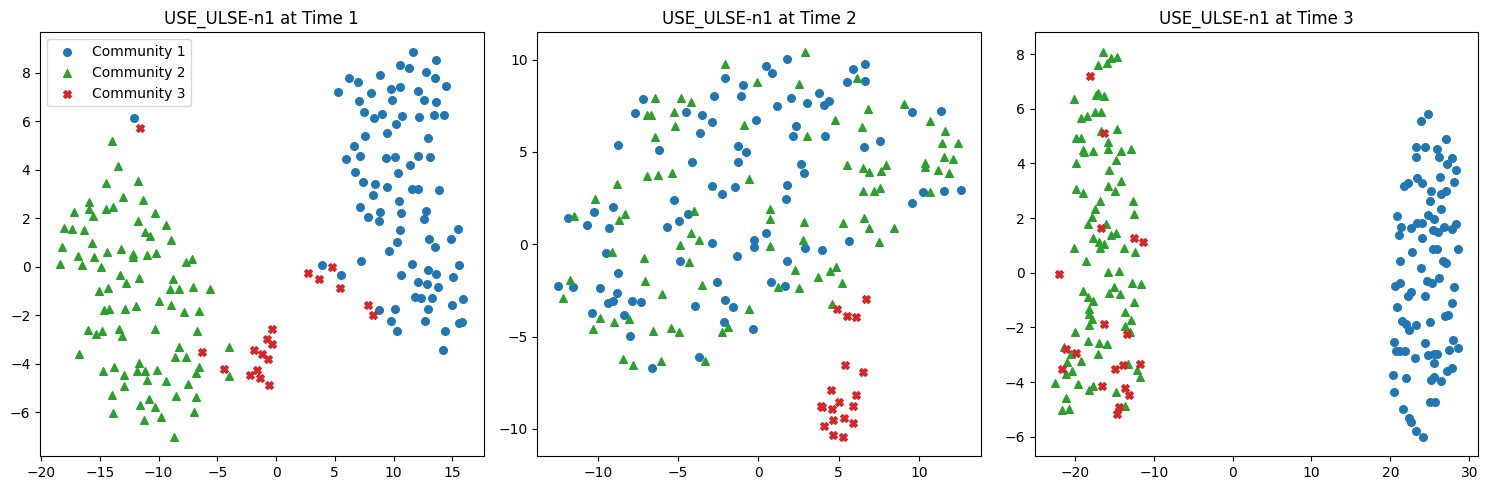}\\[6pt]
  \includegraphics[width=0.8\linewidth]{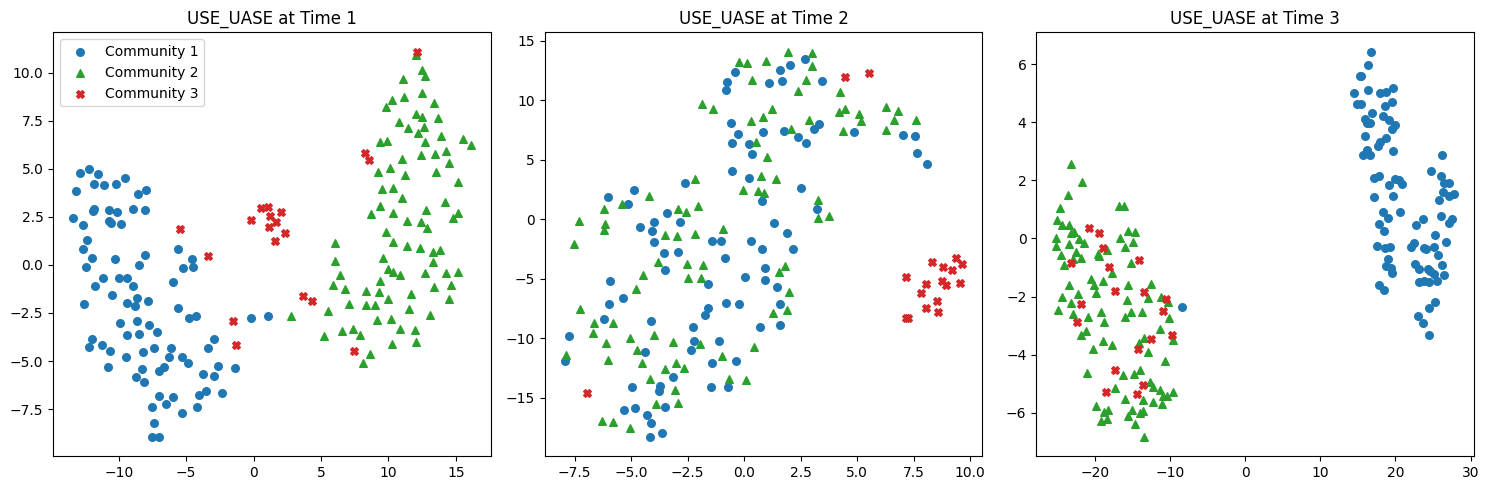}
  \caption{
  t-SNE visualizations of dynamic embeddings for the second synthetic dataset with \(p=0.4\). Colors labeled as Community 1–3 correspond to the three latent community trajectories. Top: ULSE-n1. Bottom: UASE.
  }
  \label{fig:tsne_embeddings_1}
\end{figure*}

In the second synthetic dataset, the imbalance in community size distributions leads to substantially lower degrees for nodes in the smallest community. As reported in Table~1 of the main paper, this degree heterogeneity has a markedly stronger negative impact on UASE than on ULSE.

To further examine this effect, Figure~\ref{fig:exp2_metrics} reports the performance of ULSE-n1, ULSE-n2, and UASE in terms of NMI and ARI as the intra-community connection probability \(p\) varies, with the inter-community probability fixed at \(q=0.2\). For the second synthetic dataset, UASE performance degrades sharply in the range \(p \in [0.4, 0.45]\), while both ULSE-n1 and ULSE-n2 remain stable. This behavior explains the superior performance of ULSE observed in the main experimental results.

The statistical significance of this performance gap is summarized in Table~\ref{tab:pvalues}, which reports one-sided \(p\)-values comparing ULSE against UASE for both NMI and ARI on the second synthetic dataset.

\begin{table}[h]
    \centering
    \caption{One-sided \(p\)-values comparing ULSE and UASE on the second synthetic dataset}
    \label{tab:pvalues}
    \begin{tabular}{lcccc}
        \toprule
        Method & Metric & \(p = 0.4\) & \(p = 0.45\) \\
        \midrule
        ULSE-n1 & NMI & \(1.535 \times 10^{-7}\) & \(4.899 \times 10^{-7}\) \\
                & ARI & \(2.441 \times 10^{-9}\) & \(1.952 \times 10^{-6}\) \\
        \midrule
        ULSE-n2 & NMI & \(7.8 \times 10^{-11}\) & \(2.347 \times 10^{-7}\) \\
                & ARI & \(< 1 \times 10^{-12}\) & \(1.552 \times 10^{-6}\) \\
        \bottomrule
    \end{tabular}
\end{table}

The root cause of this discrepancy lies in the design of UASE, which is known to suppress information from low-degree nodes by treating it as noise \citep{krzakala2013spectral}. In contrast, ULSE-n1 incorporates degree normalization and leverages smaller singular values of the unfolded normalized Laplacian, thereby preserving informative structure associated with low-degree nodes.

For additional qualitative insight, Figure~\ref{fig:tsne_embeddings_1} shows t-SNE \citep{maaten2008visualizing} projections of the learned embeddings for the second synthetic dataset with \(p=0.4\). ULSE-n1 produces well-separated and temporally consistent clusters, whereas UASE misplaces nodes from the smallest community (shown as red crosses), scattering them across the embedding space. This misalignment directly contributes to the degraded clustering performance of UASE in this regime.

\section{Stability Results Including UASE}

For completeness, we report the full set of stability experiments presented in the main paper, now including a direct comparison with UASE in Figure~\ref{fig:complete}. All embeddings (UASE, ULSE-n1, and ULSE-n2) are three-dimensional. The figure displays two coordinates corresponding to the two largest singular values for UASE and ULSE-n2, and the two smallest nontrivial singular values for ULSE-n1.

Across the evaluated synthetic settings, all three methods recover the expected temporal evolution of communities and exhibit consistent alignment across snapshots. In particular, UASE also satisfies the stability conditions in this controlled regime, consistent with its theoretical guarantees under dynamic latent position models. This experiment highlights that ULSE preserves stability properties comparable to UASE while extending them to Laplacian based operators.

\begin{figure*}[htbp]
  \centering
  \includegraphics[width=0.8\linewidth]{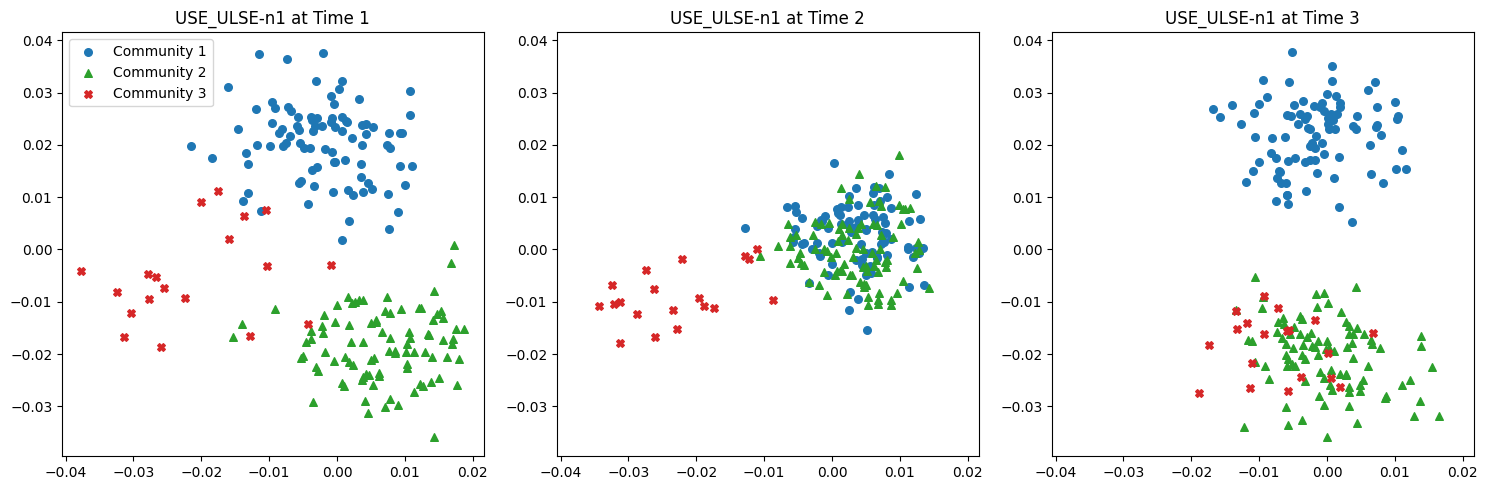}\\[6pt]
  \includegraphics[width=0.8\linewidth]{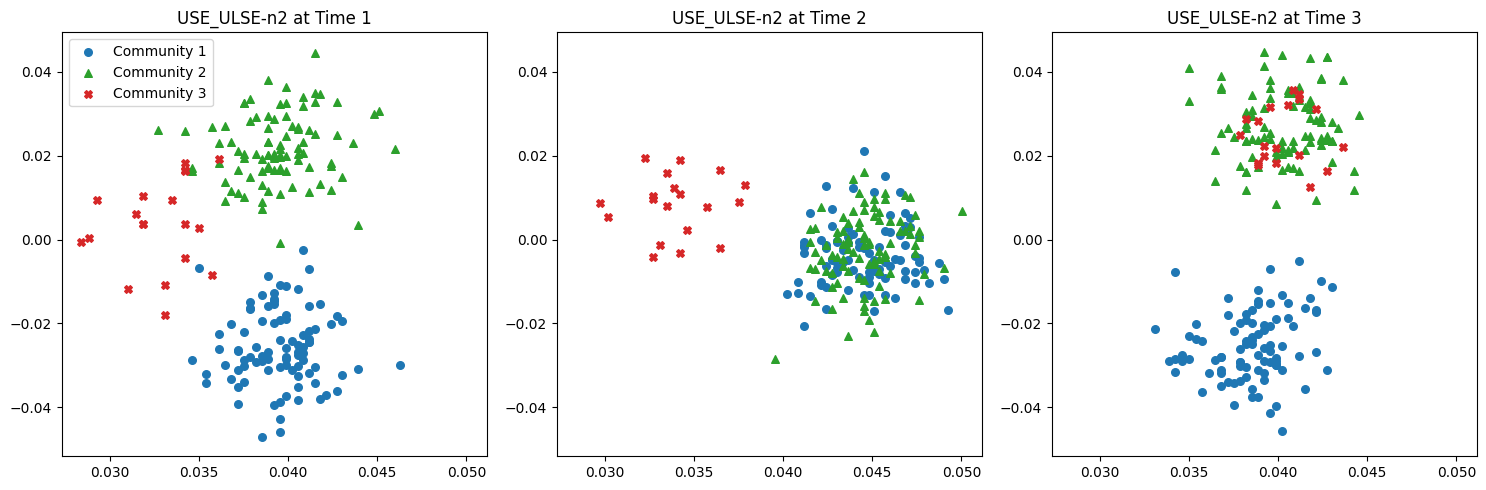}\\[6pt]
  \includegraphics[width=0.8\linewidth]{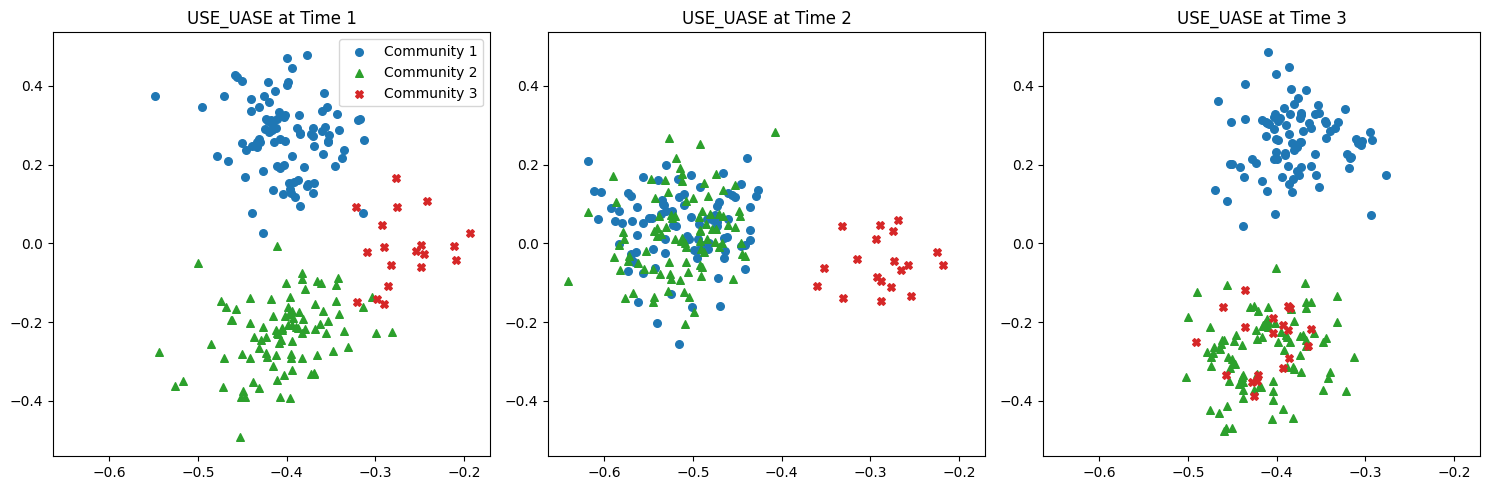}
  \caption{
  Three-dimensional dynamic embeddings on the synthetic dataset. Colors labeled as Community 1–3 correspond to the three latent community trajectories. Top: ULSE-n1. Middle: ULSE-n2. Bottom: UASE.}
  \label{fig:complete}
\end{figure*}
\section{Stability Analysis of Deep Learning Methods}

We evaluate whether representative deep learning models for temporal graphs satisfy the stability conditions studied in this thesis using the synthetic dynamic networks described earlier. For qualitative assessment, we apply t-SNE \citep{maaten2008visualizing} to project learned node embeddings into two dimensions. Although t-SNE does not preserve global geometry, it is sufficient to reveal violations of cross-sectional stability and to visually assess whether community structure remains coherent across time.

As stable baselines, we include t-SNE visualizations of ULSE-n1 and ULSE-n2. We compare these against embeddings produced by four widely used deep learning approaches for temporal networks: JODIE \citep{kumar2019predicting}, DyRep \citep{trivedi2019dyrep}, TGN \citep{rossi2020temporal}, and DyGFormer \citep{yu2023better}. All models are trained and evaluated using their standard, publicly documented protocols.

Figures~\ref{fig:tsne_embeddings} and~\ref{fig:tsne_embeddings_dl} show that ULSE-n1 and ULSE-n2 yield embeddings with clearly separated communities that remain aligned across time steps, reflecting both cross-sectional and longitudinal stability. In contrast, the deep learning methods exhibit substantial instability: community clusters fragment, overlap, or drift across snapshots, indicating that the learned embeddings are not comparable over time.

These results are consistent with the design objectives of existing deep temporal graph models. Such methods are primarily optimized for downstream predictive tasks (e.g., link prediction or event forecasting) and do not impose geometric constraints that enforce temporal alignment or invariance. Consequently, their embeddings generally fail to satisfy the stability properties required for principled longitudinal analysis.

\begin{figure*}[htbp]
  \centering
  \includegraphics[width=0.8\linewidth]{fig_backed/synthetic_exp2.2_n200_tsne_3timesteps_USE_ULSE-n1.png}
  \includegraphics[width=0.8\linewidth]{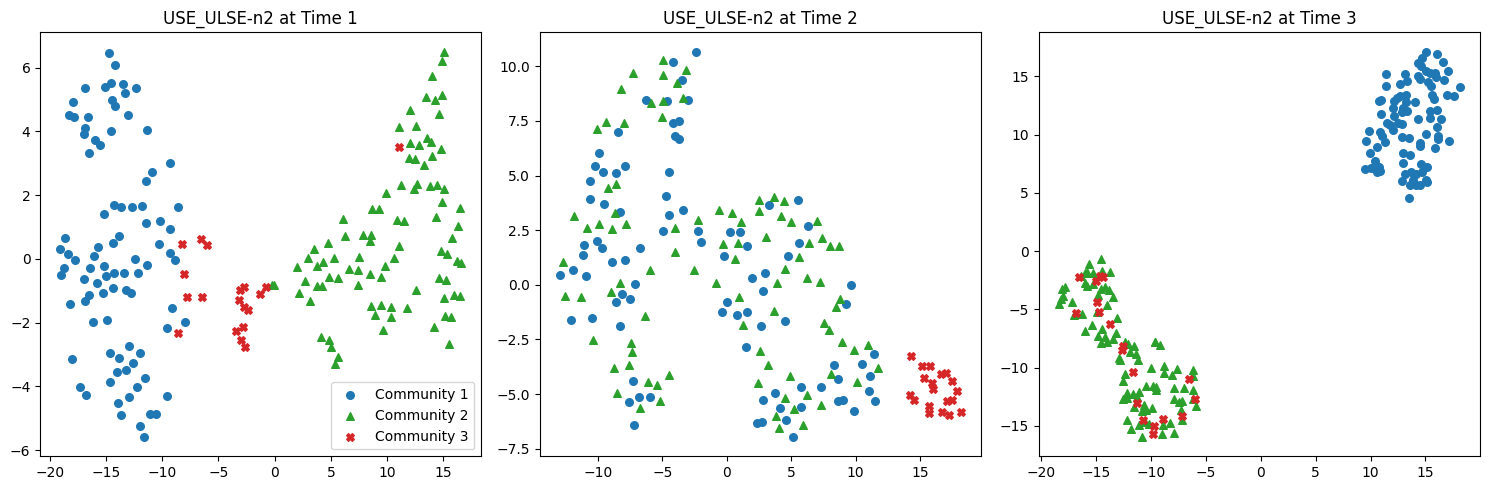}
  \caption{
  t-SNE projections of dynamic embeddings produced by ULSE. Colors labeled as Community 1–3 correspond to the three latent community trajectories. Top: ULSE-n1. Bottom: ULSE-n2.
  }
  \label{fig:tsne_embeddings}
\end{figure*}

\begin{figure*}[htbp]
  \centering
  \includegraphics[width=0.8\linewidth]{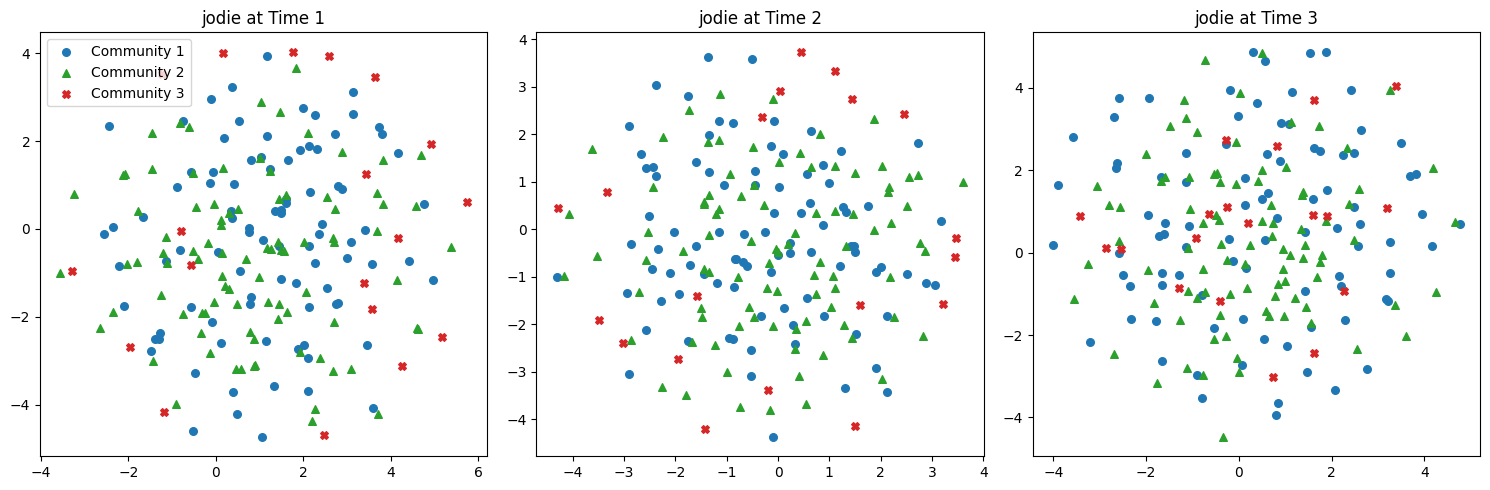}
  \includegraphics[width=0.8\linewidth]{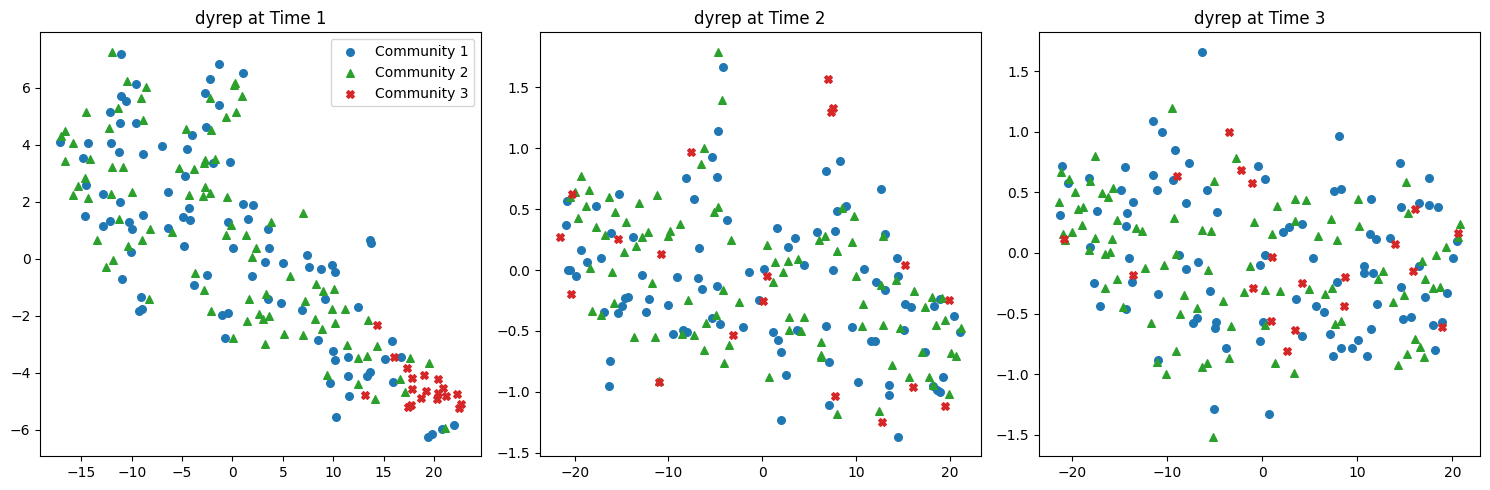}
  \includegraphics[width=0.8\linewidth]{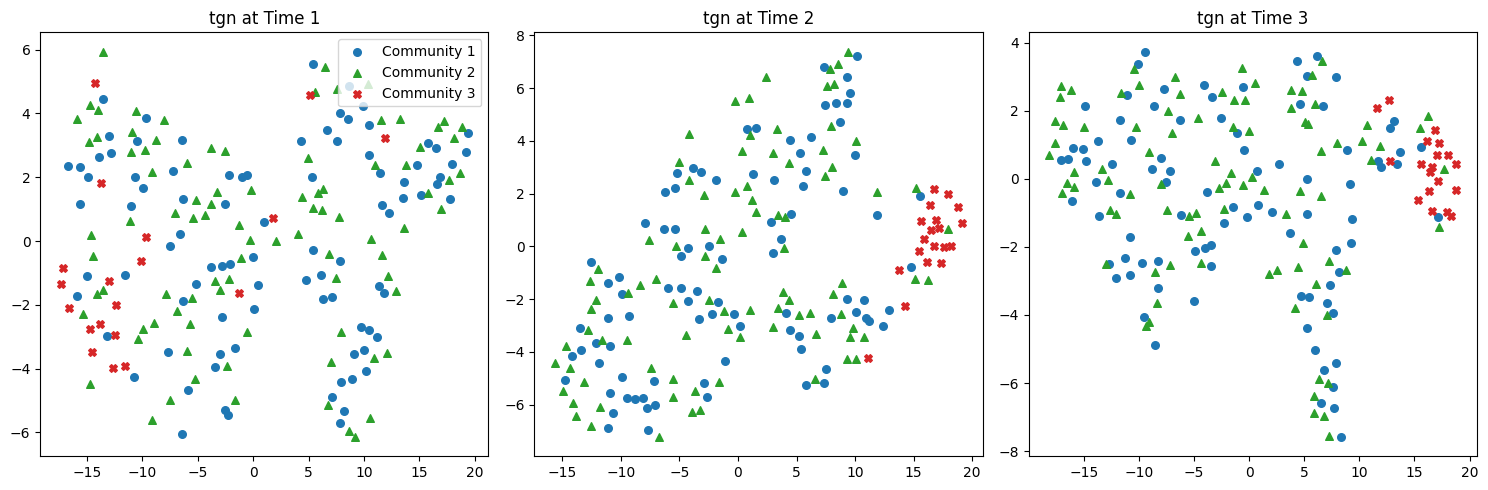}
  \includegraphics[width=0.8\linewidth]{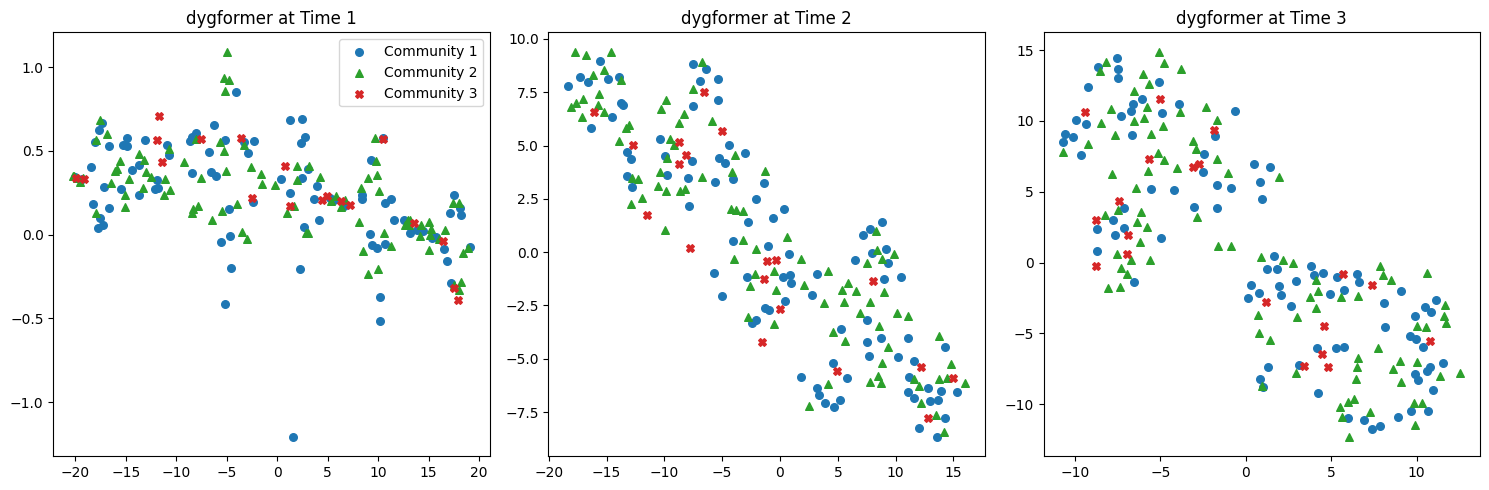}
  \caption{
  t-SNE projections of dynamic embeddings learned by deep temporal graph models. Colors labeled as Community 1–3 correspond to the three latent community trajectories. From top to bottom: JODIE, DyRep, TGN, and DyGFormer.
  }
  \label{fig:tsne_embeddings_dl}
\end{figure*}




\clearpage



\end{document}